\documentclass{article} 
\usepackage[preprint]{colm2026_conference}

\usepackage[T1]{fontenc}
\usepackage[dvipsnames]{xcolor}
\usepackage{microtype}
\usepackage{url}
\usepackage{booktabs}
\usepackage{graphicx}
\usepackage{subcaption}
\usepackage{multirow}
\usepackage{multicol}
\usepackage{algorithm}
\usepackage{algpseudocode}
\usepackage{amsmath,amssymb}
\usepackage{mathtools}
\usepackage{amsthm}
\usepackage{tikz}
\usepackage{wrapfig}
\usepackage[most]{tcolorbox}
\usepackage{fancyvrb}
\usepackage{hyperref}

\usetikzlibrary{shapes.geometric, arrows.meta, positioning, fit, backgrounds, calc}

\theoremstyle{plain}
\newtheorem{theorem}{Theorem}[section]
\newtheorem{proposition}[theorem]{Proposition}

\theoremstyle{definition}

\theoremstyle{remark}

\usepackage{lineno}

\definecolor{darkblue}{rgb}{0, 0, 0.5}
\hypersetup{colorlinks=true, citecolor=darkblue, linkcolor=darkblue, urlcolor=darkblue}

\newcommand{\name}[1]{\textsc{HeaPA}}

\definecolor{poolblue}{RGB}{66, 133, 244}
\definecolor{lowred}{RGB}{234, 67, 53}
\definecolor{highgreen}{RGB}{52, 168, 83}
\definecolor{sampleorange}{RGB}{251, 188, 5}
\definecolor{augpurple}{RGB}{142, 68, 173}
\definecolor{teacherteal}{RGB}{26, 188, 156}
\definecolor{rewardyellow}{RGB}{241, 196, 15}
\definecolor{policyblue}{RGB}{52, 152, 219}

\definecolor{PromptBg}{RGB}{250,250,250}
\definecolor{PromptFrame}{RGB}{220,220,220}
\definecolor{PromptIn}{RGB}{0,92,185}        
\definecolor{PromptOut}{RGB}{134,64,152}     
\definecolor{PromptTag}{RGB}{0,110,103}      

\tcbset{
  promptbox/.style={
    enhanced,
    colback=PromptBg,
    colframe=PromptFrame,
    boxrule=0.6pt,
    arc=2pt,
    left=8pt,right=8pt,top=6pt,bottom=6pt,
    fonttitle=\bfseries,
    breakable
  }
}

\DefineVerbatimEnvironment{PromptVerbatim}{Verbatim}{
  fontsize=\footnotesize,
  baselinestretch=1,
  commandchars=\\\{\}, 
  xleftmargin=0em,
  xrightmargin=0em
}

\title{\name{}: Difficulty-Aware Heap Sampling and On-Policy Query Augmentation for LLM Reinforcement Learning}

\author{
Weiqi Wang$^{1,2}$\thanks{Work done during his internship at Amazon.com Inc.} \quad Xin Liu$^{1}$ \quad Binxuan Huang$^{1}$ \quad Hejie Cui$^{1}$ \\
~\textbf{Rongzhi Zhang}$^{1}$ \quad \textbf{Changlong Yu}$^{1}$ \quad \textbf{Shuowei Jin}$^{1}$ \quad ~\textbf{Jingfeng Yang}$^{1}$ \\
~\textbf{Qingyu Yin}$^{1}$ \quad \textbf{Zhengyang Wang}$^{1}$ \quad \textbf{Zheng Li}$^{1}$ \quad \textbf{Yifan Gao}$^{1}$ \\
~\textbf{Priyanka Nigam}$^{1}$ \quad \textbf{Bing Yin}$^{1}$ \quad \textbf{Lihong Li}$^{1}$ \quad \textbf{Yangqiu Song}$^{1,2}$ \\
$^{1}$ Stores Foundational AI, Amazon Inc., Palo Alto, CA, USA \\
$^{2}$ Department of Computer Science and Engineering, HKUST, Hong Kong SAR, China \\
\texttt{wwangbw@cse.ust.hk, xliucr@amazon.com}
}

\begin{document}

\ifcolmsubmission
\linenumbers
\fi

\maketitle

\begin{abstract}
RLVR has become a standard recipe for training LLMs on reasoning tasks with verifiable outcomes, but when rollout generation dominates the cost, efficiency hinges on which prompts are sampled and when.
In practice, prompt pools are often static or only weakly coupled to policy progress, so uniform sampling fails to track the moving capability frontier and wastes rollouts on regions that are already solved or still unreachable.
Prior methods improve efficiency via filtering, curricula, adaptive rollout allocation, or teacher guidance, but they often assume a fixed pool, which does not support stable on-policy pool growth, or they introduce additional teacher cost and latency.
In this work, we propose \textbf{\name{}} \textit{(\textbf{\underline{Hea}}p Sampling and On-\textbf{\underline{P}}olicy Query \textbf{\underline{A}}ugmentation)}, which maintains a bounded, evolving pool, tracks the frontier with heap-based boundary sampling, grows the pool via on-policy augmentation under lightweight asynchronous validation, and stabilizes correlated queries via topology-aware pool statistics re-estimation and controlled reinsertion.
Across two training corpora, two training recipes, and seven benchmarks, \name{} consistently improves accuracy and reaches target performance with fewer computations at comparable wall-clock time.
Analyses attribute the gains to frontier-focused sampling and on-policy pool growth, with more pronounced improvements at mid-to-large model scales.
Our training code is publicly available at \href{https://github.com/horizon-llm/HeaPA}{https://github.com/horizon-llm/HeaPA}.
\end{abstract}

\section{Introduction}
Large language models (LLMs;~\citealp{Gemini2.5,KimiK2,Seed1.5}) have made rapid progress on reasoning tasks, in part driven by reinforcement learning with verifiable rewards (RLVR;~\citealp{GRPO,DBLP:journals/corr/abs-2503-06639}), where correctness can be checked automatically (with a small set of discrete verifier outcomes).
A major practical bottleneck lies in how prompts are selected throughout training.
When rollout generation dominates the cost, uniform sampling from a static or weakly coupled prompt pool can miss the moving frontier~\citep{DBLP:journals/corr/abs-2506-02177}.
As a result, batches frequently contain extremes: very hard prompts that yield mostly incorrect or overlong rollouts, and very easy prompts that yield mostly correct rollouts.
Both provide little learning signal for group-based RLVR updates while consuming substantial rollout budget~\citep{DAPO}.

Recent efforts to improve RLVR efficiency follow complementary directions.
A data-centric line improves utilization through selection and filtering, including oversampling trajectories and downsampling to retain informative rollouts, as well as prompt-level scoring and curriculum scheduling over large corpora~\citep{DBLP:journals/corr/abs-2504-13818,DBLP:journals/corr/abs-2509-03403,DBLP:journals/corr/abs-2509-02479,DBLP:journals/corr/abs-2508-17445,chen-etal-2025-scale,DBLP:journals/corr/abs-2506-11480}.
Complementary rollout-side methods reallocate inference budget toward higher-uncertainty, higher-potential prompts~\citep{Reinforce-Ada,DBLP:journals/corr/abs-2507-04632}, while teacher-augmented approaches densify supervision on the student's rollouts via on-policy distillation, including black-box formulations that rely only on teacher text~\citep{lu2025onpolicydistillation,blackboxOPD}.

Although these directions improve RLVR efficiency, several gaps remain.
Many strategies implicitly assume a largely fixed prompt pool and focus on filtering, resampling, or reallocating rollout effort within it, making it difficult to stay aligned with the moving capability frontier as the policy improves.
As the frontier shifts, difficulty and learnability estimates can become stale, and rollouts are once again spent on saturated easy prompts or unreachable hard prompts.
Moreover, query-side selection and curriculum scheduling primarily operate over pre-existing corpora and provide limited mechanisms for growing new frontier-aligned prompts on-policy, while teacher-augmented supervision can shift the bottleneck to teacher cost and latency and may be brittle across model families due to tokenization and generation mismatches~\citep{lu2025onpolicydistillation,blackboxOPD}.
Finally, when new prompts are introduced through augmentation, they are often correlated through shared templates or local edits, and naive reinsertion can destabilize sampling priorities and induce oscillatory curricula.

Motivated by these gaps, we propose \textbf{\name{}} \textit{(\textbf{\underline{Hea}}p Sampling and On-\textbf{\underline{P}}olicy Query \textbf{\underline{A}}ugmentation)}, a query-lifecycle framework that jointly addresses frontier tracking, on-policy pool growth, and stable integration.
\name{} maintains a bounded query pool and concentrates training on the evolving capability frontier via heap-based boundary sampling, which prioritizes a medium-difficulty band where rollouts are neither always correct nor always incorrect.
In parallel, \name{} grows the pool on-policy by letting the current policy generate new queries and routing them through asynchronous, lightweight validation to obtain verifiable answers before insertion, keeping data expansion aligned with the student while controlling overhead.
To prevent correlated augmentations from destabilizing the curriculum, \name{} tracks augmentation lineage and applies topology-aware pool statistics re-estimation via lineage propagation with controlled reinsertion, yielding a smoother coupling between query generation, sampling, and policy updates.

We evaluate \name{} by plugging it into GRPO and DAPO, using Qwen2.5-7B as the backbone, and training on two widely used math corpora, DAPO-Math and OpenR1-Math~\citep{Qwen2.5,GRPO,DAPO,OpenR1}.
Across seven held-out benchmarks spanning competition math and broader reasoning, \name{} consistently improves over the original pipelines and over strong sampling and augmentation baselines, and reaches the same target performance with fewer rollout tokens at comparable wall-clock time.
Our analysis shows that heap-based boundary sampling concentrates probability mass near the frontier rather than on extreme difficulty, that verified on-policy augmentations contribute higher reward and advantage signals than seed queries, and that the benefits are especially pronounced at mid-to-large model scales in parameter scaling studies.
Finally, profiling confirms that pool operations and asynchronous validation introduce only minor overhead relative to rollout generation, preserving practical throughput at scale.

\section{Related Works}
\label{sec:related_works}
\subsection{Data Selection and Sampling in LLM Post-Training}
Prior work studies data selection and sampling as a primary lever for improving the efficiency and stability of LLM reinforcement learning.
In RLVR, where correctness provides an automatic signal, this is often realized through selection at two granularities:
(i) \emph{trajectory-level} oversampling followed by downsampling or filtering to retain informative rollouts, for example by increasing within-group reward diversity~\citep{DBLP:journals/corr/abs-2504-13818}, using process-level signals to reduce spurious correctness~\citep{DBLP:journals/corr/abs-2509-03403}, filtering unstable tool-integrated traces~\citep{DBLP:journals/corr/abs-2509-02479}, or expanding collection via structured branching or search to generate candidates for reuse~\citep{DBLP:journals/corr/abs-2508-17445};
and (ii) \emph{prompt-level} reshaping of the training distribution, either by selecting or scheduling prompts from large corpora or by reallocating rollout compute toward higher-uncertainty, higher-potential prompts to avoid low-signal groups, including corpus-based prompt scoring and curricula~\citep{chen-etal-2025-scale,DBLP:journals/corr/abs-2506-11480}, adaptive rollout allocation~\citep{Reinforce-Ada,DBLP:journals/corr/abs-2507-04632,DBLP:journals/corr/abs-2509-25808}, filtering uninformative prompts~\citep{DAPO,DBLP:journals/corr/abs-2504-11343}, modifying advantage computation~\citep{DBLP:journals/corr/abs-2509-18851,DBLP:journals/corr/abs-2509-21880}, and curriculum-style sampling based on historical statistics or online estimates~\citep{DBLP:journals/corr/abs-2504-05520,DBLP:journals/corr/abs-2506-09016}.
Our approach unifies prompt-level distribution shaping with online pool maintenance: we keep a bounded, evolving query pool coupled to policy progress and apply heap-based boundary sampling to track the empirical frontier via reward trends, focusing probability mass on a medium-difficulty band instead of relying on offline scoring or ad hoc filtering rules.

\paragraph{Relation to PCL and SvS.}
Two closely related directions are prompt curriculum learning (PCL) and self-play with variational synthesis (SvS).
PCL selects intermediate-difficulty prompts using a policy-dependent difficulty estimate, which is closely aligned with our motivation of focusing RLVR on the moving capability frontier.
In contrast, \name{} tracks the frontier through empirical rollout-derived pool statistics and heap boundary sampling, while also supporting on-policy pool growth and lineage-aware reinsertion.
SvS synthesizes answer-preserving variants from prompts the policy can already solve, avoiding teacher annotation by construction.
\name{} is complementary: its lifecycle can incorporate teacher-free answer-preserving variants, but also supports teacher-annotated new-answer augmentations when answer-preserving transformations are insufficient.
We provide full PCL-style and SvS-style comparisons in Appendix~\ref{app:related_baselines}.

\subsection{Teacher-Augmented Training and Distillation}
Another line of work studies teacher-augmented RL training, aiming to distill prompts or rollout preferences from stronger teacher models to improve student policies at scale~\citep{DBLP:conf/aaai/YangYYAYH025,DBLP:conf/nips/TsengWLI22,DBLP:journals/corr/abs-2506-02208}.
More recently, \citet{lu2025onpolicydistillation} distill a teacher's token-level logits on the student's rollouts to provide denser learning signals, and \citet{blackboxOPD} extend this setting to black-box teachers using objectives that rely only on teacher text while remaining on-policy~\citep{blackboxOPD}.
However, dense teacher supervision can be costly and brittle.
In contrast, we use the teacher only for annotating verifiable answers for newly generated queries, reducing reliance on dense teacher supervision and keeping augmentation calibrated to the student via on-policy generation.


\section{Preliminaries}
\label{sec:preliminaries}
We define the components of the RLVR pipeline and establish notation for how prompt sampling and pool maintenance interact with RL updates.
A \emph{query} (or \emph{prompt}) is an input sequence $x \in \mathcal{X}$, and the policy $\pi_{\theta}$ generates an output string $y \in \mathcal{Y}$.
Each query is associated with a ground-truth answer $g(x)$.
Following~\citet{DAPO}, we use a deterministic verifier to assign a scalar reward
\begin{equation}
r = R(x,y,g(x)) \in \{-1,0,1\},
\label{eq:rlvr-reward}
\end{equation}
where $R(\cdot)$ performs parsing and matching against $g(x)$.
We set $r=1$ if the parsed answer exactly matches $g(x)$, $r=0$ if it is incorrect, and $r=-1$ if the rollout is invalid.

\textbf{Prompt pool and sampling distribution.}
Let $\mathcal{D}_0$ denote a seed set of verifiable queries, each providing a prompt--answer pair $(x, g(x))$.
At training step $t$, RLVR maintains a bounded \emph{prompt pool} (or replay buffer) $\mathcal{P}_t=\{q_k\}_{k=1}^{|\mathcal{P}_t|}$ with capacity $|\mathcal{P}_t|\le N$.
In \name{}, $\mathcal{P}_t$ denotes an \emph{active} bounded pool used for sampling, while previously trained records may be temporarily moved to an archive for controlled recycling (\S\ref{sec:method:reinsert}).
Each \emph{query record} $q_k$ minimally stores $(x_k, g(x_k))$ and may additionally store training-time statistics.
We denote the verifiable per-rollout reward by $r_{i,j}\in\{-1,0,1\}$.

For pool maintenance, each record maintains a \emph{pool statistic} $\tilde r_k\in[-1,1]$ (e.g., the group mean shaped reward), which is used by the sampler to track empirical difficulty.
Newly inserted records (including on-policy augmentations) enter the pool as \emph{unscored} items with $\tilde r_k$ undefined.
After they are sampled once and their first rollout group is verified, we set $\tilde r_k$ and then treat them as scored items for subsequent sampling.
A sampling policy $\textsc{Sample}$ induces a categorical distribution over pool indices,
\begin{equation}
p_t(k) \triangleq \Pr(i=k \mid \mathcal{P}_t), \qquad k \in \{1,\dots,|\mathcal{P}_t|\},
\label{eq:pool_dist}
\end{equation}
which may depend on record statistics (e.g., $\{\tilde r_k\}$).
A mini-batch of prompts is drawn by sampling indices $i_1,\dots,i_B \sim p_t(\cdot)$ and forming
\begin{equation}
\mathcal{B}_t = \{x_{i_b}\}_{b=1}^{B}, \qquad |\mathcal{B}_t| = B,
\label{eq:batch}
\end{equation}
where $B$ is the batch size.
After generating rollouts and verifying rewards on $\mathcal{B}_t$, the corresponding records in $\mathcal{P}_t$ update their stored statistics (e.g., $\tilde r_k$), thereby coupling the next-step sampling distribution $p_{t+1}$ to policy progress.

\textbf{Rollouts and group-based RLVR.}
For each query $x_i \in \mathcal{B}_t$, we sample a \emph{group} of $n$ rollouts
\begin{equation}
y_{i,1},\dots,y_{i,n} \sim \pi_{\theta}(\cdot \mid x_i),
\label{eq:group-rollouts}
\end{equation}
and obtain rewards $r_{i,j} = R(x_i, y_{i,j}, g(x_i))$.
Group-based methods form an advantage estimate using a within-prompt baseline:
\begin{equation}
b_i = \frac{1}{n}\sum_{j=1}^{n} r_{i,j},
\qquad
A_{i,j} = r_{i,j} - b_i.
\label{eq:group-advantage}
\end{equation}
This captures learnability when rollouts within a group are mixed, producing non-degenerate advantages.

\textbf{Policy optimization objective.}
We optimize $\pi_\theta$ using a PPO/GRPO-style clipped surrogate objective.
Let $\pi_{\text{old}}$ denote the snapshot policy used to generate the current rollouts and define the importance ratio
\begin{equation}
\rho_{i,j}(\theta) = \frac{\pi_\theta(y_{i,j}\mid x_i)}{\pi_{\text{old}}(y_{i,j}\mid x_i)}.
\label{eq:ratio}
\end{equation}
We then optimize the clipped surrogate:
\begin{equation}
\mathcal{L}_{\text{clip}}(\theta) =
\mathbb{E}_{(i,j)}\!\left[
\min\Big(
\rho_{i,j}(\theta) A_{i,j},\;
\bar{\rho}_{i,j}(\theta) A_{i,j}
\Big)
\right],
\label{eq:ppo-clip}
\end{equation}
where $\bar{\rho}_{i,j}(\theta) \triangleq \operatorname{clip}\!\left(\rho_{i,j}(\theta), 1-\epsilon, 1+\epsilon\right)$ and $\epsilon>0$ is the clipping parameter.

\begin{figure*}[t]
    \centering
    \includegraphics[width=\linewidth]{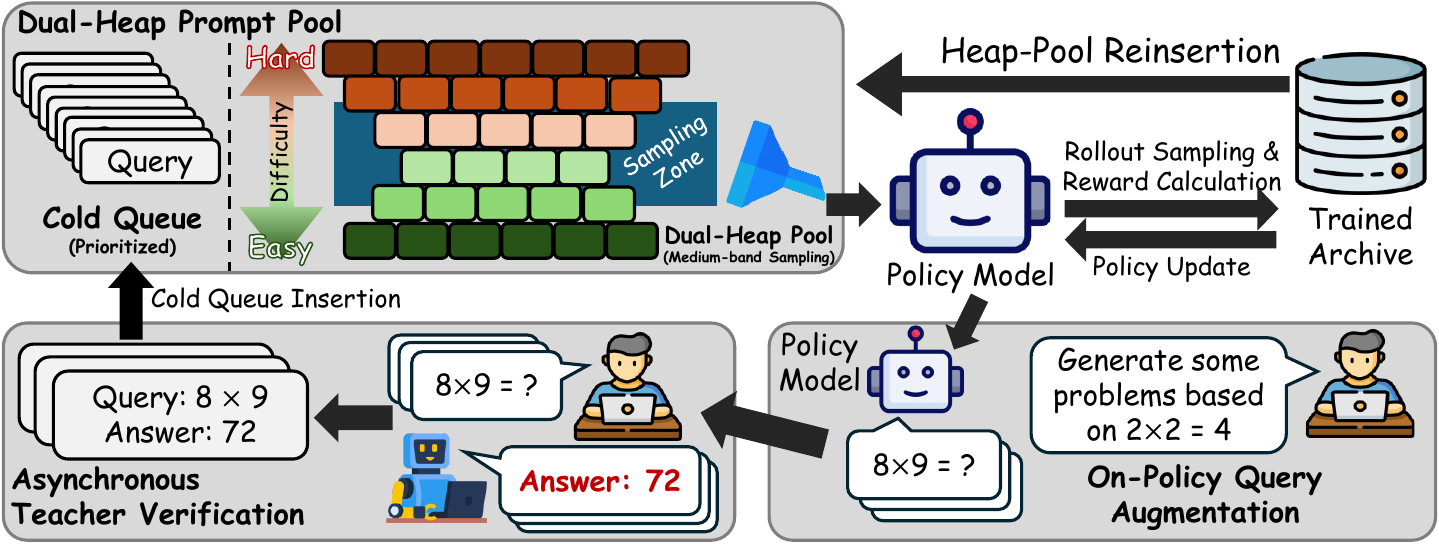}
    \caption{
    Overview of \name{}: a bounded prompt pool with a cold queue and difficulty-based heaps enables boundary sampling, while on-policy augmentation with asynchronous teacher verification grows the pool and controlled reinsertion recycles archived records.
    }
    \label{fig:heapa-overview}
\end{figure*}

\section{The \name{} Framework}
\label{sec:HeaPA_framework}
In this section, we propose \name{}, a query-lifecycle framework for RLVR that couples prompt sampling, on-policy pool growth, and stable integration.
As shown in Figure~\ref{fig:heapa-overview} and Algorithm~\ref{alg:heapa}, \name{} has five components:
(i) \textbf{dual-heap prompt pooling and boundary sampling} to draw $\mathcal{B}_t$ from $\mathcal{P}_t$ (\S\ref{sec:method:pool});
(ii) \textbf{RLVR rollout and policy update}, with pool-statistic updates and query archiving (\S\ref{sec:method:rlvr_update});
(iii) \textbf{on-policy query augmentation with asynchronous verification} to form new verifiable pairs $(x, g(x))$ (\S\ref{sec:method:aug});
(iv) \textbf{lineage-aware statistic re-estimation} over the augmentation graph $\mathcal{G}_t$ to stabilize correlated inserts (\S\ref{sec:method:refresh});
and (v) \textbf{controlled reinsertion} of refreshed archived records into $\mathcal{P}_t$ to smooth pool evolution (\S\ref{sec:method:reinsert}).

\vspace{-0.05in}
\subsection{Bounded Dual-Heap Prompt Pool}
\label{sec:method:pool}
\textbf{Pool state.}
We maintain a bounded prompt pool $\mathcal{P}_t$ with capacity $N$.
Following \S\ref{sec:preliminaries}, sampling draws indices $i \sim p_t(\cdot)$ over pool records.
\name{} instantiates $p_t$ using a dual-heap pool that tracks a boundary between lower- and higher-\emph{score} queries (harder/easier) using a stored per-record \emph{pool statistic} $\tilde r_k \in [-1,1]$.
This statistic is distinct from the per-rollout training reward $r_{i,j}$ as $\tilde r_k$ is an aggregate quantity maintained for sampling.

\textbf{Cold queue for unscored items.}
All seed queries from $\mathcal{D}_0$ enter $\mathcal{P}_t$ as \emph{cold} items with $\tilde r_k$ undefined.
Cold items are prioritized for sampling to quickly obtain verifier outcomes and initialize their pool statistics.
A record leaves the cold queue after its first verified rollout group, at which point $\tilde r_k$ becomes defined and the record is inserted into the scored partitions described below.

\textbf{Two partitions and the boundary.}
For scored items with defined $\tilde r_k$, we maintain two partitions: a low partition $\mathcal{P}_t^{\text{low}}$ and a high partition $\mathcal{P}_t^{\text{high}}$.
Intuitively, $\mathcal{P}_t^{\text{low}}$ contains harder items (lower $\tilde r_k$) and $\mathcal{P}_t^{\text{high}}$ contains easier items (higher $\tilde r_k$).
The \emph{boundary} is defined by the maximum score in $\mathcal{P}_t^{\text{low}}$ and the minimum score in $\mathcal{P}_t^{\text{high}}$.
We maintain a target fraction of scored items assigned to $\mathcal{P}_t^{\text{low}}$, denoted $\alpha\in(0,1)$, and rebalance partitions as scores change.
When one partition is empty (e.g., early in training or after capacity-constrained eviction), we fall back to sampling from the non-empty scored partition until both sides are populated, after which boundary sampling resumes.

\textbf{Boundary sampling.}
Given batch size $B$, \textsc{BoundarySample} draws (1) as many cold items as available (up to $B$), and (2) the remaining items from a \emph{boundary band} formed by candidates nearest the low/high split.
Concretely, we form a candidate set by alternating between the \emph{hardest easy} items (the minimum of $\mathcal{P}_t^{\text{high}}$) and the \emph{easiest hard} items (the maximum of $\mathcal{P}_t^{\text{low}}$) until collecting at most $2B$ candidates; we then select the required number uniformly from this set and reinsert non-chosen candidates back into their heaps.
This yields a frontier-focused distribution $p_t$ that concentrates probability mass on a medium-difficulty region without relying on offline difficulty scoring.

\textbf{Optional mixing with easy items.}
For stability, we optionally mix a small portion of very easy items (near the maximum of $\mathcal{P}_t^{\text{high}}$) into each batch, which helps preserve already learned behaviors while keeping most sampling mass on the boundary band.

\vspace{-0.05in}
\subsection{RLVR Update with Group-Based Advantages}
\label{sec:method:rlvr_update}
Given a sampled mini-batch $\mathcal{B}_t$ in Eq.~\eqref{eq:batch}, we generate groups of $n$ rollouts per query in Eq.~\eqref{eq:group-rollouts} and compute shaped training rewards using Eq.~\eqref{eq:rlvr-reward}.
We then compute within-prompt baselines and advantages in Eq.~\eqref{eq:group-advantage} and apply a standard group-based RLVR optimizer (GRPO or DAPO) using the clipped surrogate objective in Eq.~\eqref{eq:ppo-clip}, without modifying the underlying update rule.
In addition to per-rollout rewards, we store a per-record \emph{pool statistic} $\tilde r_k\in[-1,1]$ used by the sampler; for a query $x_i$ we set $\tilde r_i \leftarrow \frac{1}{n}\sum_{j=1}^{n} r_{i,j}$ (group mean shaped reward), and optionally maintain an exponential moving average over steps.
After updating $\tilde r_k$, we remove trained records from the active pool and move them into an archive for controlled recycling described in \S\ref{sec:method:reinsert}.
The archive stores records that are temporarily removed from the active pool after training; recycling reintroduces them in a controlled manner to prevent the pool from collapsing or becoming dominated by correlated inserts.

\vspace{-0.05in}
\subsection{On-Policy Query Augmentation with Asynchronous Verification}
\label{sec:method:aug}
A fixed seed pool can become stale as the policy improves.
To grow $\mathcal{P}_t$ in a frontier-aligned manner without shifting the bottleneck to expensive teacher supervision, \name{} performs on-policy query augmentation followed by lightweight asynchronous verification.

\textbf{On-policy augmentation.}
Conditioned on sampled prompts in $\mathcal{B}_t$, the current policy $\pi_\theta$ generates $n_{\text{aug}}$ candidate augmented queries.
Each candidate inherits a parent pointer to its source query record, inducing an augmentation lineage graph $\mathcal{G}_t$ over record IDs.

\textbf{Teacher verification.}
Candidates are sent to an asynchronous teacher model $\mathcal{T}$ that \emph{annotates} a verifiable final answer $g(x)$.
We discard candidates marked unsolvable, lacking a parsable numeric answer, or violating dataset filters.
The teacher is used to annotate answers for newly augmented queries.
This decouples training throughput from teacher latency: the RL loop proceeds while verification runs in parallel, and verified answers are drained and associated with their corresponding prompts for insertion into $\mathcal{P}_t$ whenever available.

\textbf{Insertion into the pool.}
Once $\mathcal{T}$ returns an answer for a candidate prompt $x$, we form a new verifiable pair $(x,g(x))$ using the policy-generated prompt $x$ and the teacher-annotated $g(x)$.
The resulting record enters the \emph{cold} queue with $\tilde r_k$ undefined and is prioritized for sampling.
After its first rollout group is verified, we set $\tilde r_k$ and insert the record into the scored partitions for subsequent boundary sampling.

\vspace{-0.05in}
\subsection{Lineage-Aware Pool Statistics Re-estimation}
\label{sec:method:refresh}
On-policy augmentations are often correlated through shared templates or local edits.
If inserted naively, correlated children can temporarily dominate sampling and induce oscillatory curricula.
\name{} mitigates this by maintaining a lineage graph $\mathcal{G}_t$ and periodically re-estimating the \emph{pool statistic} $\tilde r_k$ used for sampling (not the per-rollout training reward in Eq.~\eqref{eq:rlvr-reward}) via topology-aware propagation.

\textbf{Lineage graph and level construction.}
Let $\mathcal{G}_t$ be the directed parent-to-children graph induced by augmentation.
We construct level sets by propagating depths from leaves upward.
When the graph is acyclic, we compute depths in topological order; when cycles exist, we fall back to a bounded relaxation procedure and record a cycle-detected metric.
Nodes referenced by edges but missing from the current record index are treated as dangling; we track their count as a diagnostic of asynchronous insertion and eviction.

\textbf{Difficulty-aware aggregation.}
For an internal node $u$ with children $\mathcal{C}(u)$, we aggregate children scores bottom-up.
Let $\tilde r_c$ denote a child's most recent available score, preferring already-updated values within the same refresh pass.
We associate each child with a \emph{policy-estimated} relative difficulty factor $d_c$, produced at augmentation time by the student (e.g., the scalar in the \texttt{<DIFF>} field in Appendix~\ref{app:prompts:augment}), and clamp it to a bounded range $d_c \in [d_{\min}, d_{\max}]$.
For seed queries or records without a difficulty estimate, we set $d_c \leftarrow 1.0$.
We compute a difficulty-adjusted contribution
\begin{equation}
\phi(\tilde r_c, d_c) \triangleq \operatorname{clip}_{[-1,1]}\!\left(\frac{\tilde r_c}{\max(d_c,\epsilon)}\right),
\label{eq:difficulty_transform}
\end{equation}
so that children estimated to be harder (larger $d_c$) have their influence \emph{attenuated in magnitude} in the propagation.
We then instantiate the aggregator $\operatorname{Agg}(\cdot)$ either as an unweighted mean over children (\emph{children aggregation}) or as a leaf-count-weighted mean that upweights subtrees with more descendants (\emph{path aggregation}), optionally with a softmax normalization controlled by a temperature.
Finally, we blend the aggregated signal with the parent's previous score:
\begin{equation}
\tilde r_u \leftarrow \operatorname{clip}_{[-1,1]}\!\left( \tfrac{1}{2}\tilde r_u + \tfrac{1}{2}\operatorname{Agg}\big(\{\phi(\tilde r_c,d_c)\}_{c\in\mathcal{C}(u)}\big) \right).
\label{eq:refresh_blend}
\end{equation}
This refresh produces fractional $\tilde r_k \in [-1,1]$ even though rollout rewards remain discrete, improving boundary resolution for sampling.

\vspace{-0.05in}
\subsection{Controlled Reinsertion and Recycling}
\label{sec:method:reinsert}
To avoid overwhelming the pool with recently trained or newly verified items, \name{} uses controlled reinsertion.
After a record participates in training, it is placed into an archive.
We trigger recycling when the active pool would otherwise be unable to supply a full batch, or when the archive reaches a size threshold.
At recycling time, we refresh scores for eligible archived records using \S\ref{sec:method:refresh} and reinsert them into $\mathcal{P}_t$ in bounded batches, preventing transient shifts in $p_t$.
This provides two benefits: it amortizes refresh and insertion overhead to preserve throughput, and it smooths pool evolution by preventing sudden bursts of correlated items from dominating the boundary band.

\begin{table*}[t]
\centering
\resizebox{\textwidth}{!}{
\begin{tabular}{@{}p{6cm}|ccccccc|c@{}}
\toprule
Recipe / Variant & AIME24 & AIME25 & AMC23 & GPQA & MATH500 & Minerva & Olym.Bench & Avg. \\
\midrule
\multicolumn{9}{l}{\emph{Training Dataset: \textsc{DAPO-Math}}~\citep{DAPO}} \\
\midrule
GRPO (original;~\citealp{GRPO}) & 17.3 & 17.0 & 75.7 & 39.2 & 80.6 & 46.7 & 49.2 & 46.5 \\
GRPO + Prioritized Sampling (PS) & 16.7 & 16.5 & 76.8 & 38.8 & 81.6 & 47.8 & 51.2 & 47.1 \\
GRPO + Heap Sampling & 18.3 & 16.3 & 74.8 & 39.3 & 82.3 & 46.3 & 49.9 & 46.7 \\
GRPO + Policy Augmentation (PA) & 17.7 & 15.4 & 73.7 & 38.0 & 84.8 & 48.3 & 46.6 & 46.4 \\
GRPO + PS + PA & 19.9 & 18.4 & 76.3 & 40.0 & 85.3 & 49.5 & 50.3 & 48.5 \\
Reinforce-Ada~\citep{Reinforce-Ada} & \underline{22.0} & 19.8 & 81.0 & 41.9 & 84.9 & \underline{52.0} & 51.8 & 50.3 \\
\textbf{\name{} (GRPO)} + ChildAgg & 16.4 & \textbf{\underline{22.4}} & 77.8 & 41.8 & 85.2 & 45.0 & 52.2 & 48.7 \\
\textbf{\name{} (GRPO)} + PathAgg & 21.4 & 21.0 & \underline{82.4} & \textbf{\underline{42.7}} & \textbf{\underline{85.6}} & 51.4 & \textbf{\underline{52.9}} & \textbf{\underline{51.1}} \\
\midrule
DAPO (original;~\citealp{DAPO}) & 19.6 & 18.0 & 76.5 & 38.1 & 76.7 & 47.9 & 45.9 & 46.1 \\
DAPO + Prioritized Sampling (PS) & 19.4 & 18.3 & 74.1 & 37.0 & 76.5 & 48.8 & 44.2 & 45.5 \\
DAPO + Heap Sampling & 19.0 & 16.5 & 75.3 & 36.9 & 75.1 & 46.3 & 44.2 & 44.8 \\
DAPO + Policy Augmentation (PA) & 19.0 & 16.4 & 76.5 & 38.9 & 75.9 & 47.5 & 45.3 & 45.6 \\
DAPO + PS + PA & 19.6 & 18.7 & 78.4 & 39.4 & 75.3 & 48.6 & 46.1 & 46.6 \\
\textbf{\name{} (DAPO)} + ChildAgg & 19.4 & 20.4 & \textbf{\underline{83.4}} & 40.9 & 76.9 & 48.6 & 46.6 & 48.0 \\
\textbf{\name{} (DAPO)} + PathAgg & \underline{23.3} & \underline{21.7} & 80.8 & \underline{42.0} & \underline{78.0} & \underline{50.6} & \underline{48.8} & \underline{49.3} \\
\midrule
\multicolumn{9}{l}{\emph{Training Dataset: \textsc{OpenR1-Math}}~\citep{OpenR1}} \\
\midrule
GRPO (original;~\citealp{GRPO}) & 16.5 & 11.5 & 61.7 & 37.0 & 76.4 & 36.0 & 40.2 & 39.9 \\
GRPO + Prioritized Sampling (PS) & 18.0 & 13.0 & 63.8 & 37.6 & 77.4 & 37.1 & 44.2 & 41.6 \\
GRPO + Heap Sampling & 18.1 & 12.0 & 64.3 & 36.1 & 76.7 & 35.8 & 45.5 & 41.2 \\
GRPO + Policy Augmentation (PA) & 16.9 & 11.9 & 63.7 & 36.8 & 77.6 & 35.6 & 44.6 & 41.0 \\
GRPO + PS + PA & 17.1 & 14.9 & 66.3 & 36.8 & 78.1 & 35.8 & 43.3 & 41.8 \\
Reinforce-Ada~\citep{Reinforce-Ada} & 19.8 & \underline{16.4} & 67.5 & 38.0 & \underline{81.6} & 37.2 & 46.0 & 43.8 \\
\textbf{\name{} (GRPO)} + ChildAgg & 17.7 & 15.6 & 63.7 & 36.9 & 79.2 & 36.5 & \underline{46.2} & 42.3 \\
\textbf{\name{} (GRPO)} + PathAgg & \underline{20.6} & 15.5 & \underline{68.4} & \underline{38.5} & 81.3 & \underline{37.7} & 45.9 & \underline{44.0} \\
\midrule
DAPO (original;~\citealp{DAPO}) & 19.5 & 15.4 & 65.4 & 37.8 & 80.6 & 51.1 & 45.1 & 45.0 \\
DAPO + Prioritized Sampling (PS) & 20.3 & 16.7 & 64.0 & 37.7 & 80.4 & 53.0 & 43.4 & 45.1 \\
DAPO + Heap Sampling & 16.0 & 15.3 & 64.2 & 36.9 & 83.0 & 51.8 & 44.6 & 44.5 \\
DAPO + Policy Augmentation (PA) & 19.9 & 14.8 & 66.4 & 36.6 & \underline{84.8} & 53.7 & 44.5 & 45.8 \\
DAPO + PS + PA & 20.5 & 16.1 & 68.3 & 38.1 & 82.2 & \textbf{\underline{54.8}} & 45.3 & 46.5 \\
\textbf{\name{} (DAPO)} + ChildAgg & 25.6 & \underline{17.1} & 68.3 & 37.0 & 80.4 & 52.7 & 47.5 & 46.9 \\
\textbf{\name{} (DAPO)} + PathAgg & \textbf{\underline{25.7}} & \underline{17.1} & \underline{69.4} & \underline{39.8} & 84.2 & \textbf{\underline{54.8}} & \underline{48.5} & \underline{48.5} \\
\bottomrule
\end{tabular}
}
\caption{Main evaluation results on seven math benchmarks of Qwen2.5-7B-Instruct trained using~\name{} and several baselines. 
\underline{Underlined} numbers mark block-wise best results; \textbf{bold} numbers mark column-wise best results.}
\label{table:main-evaluation-results}
\end{table*}

\vspace{-0.05in}
\section{Main Evaluations of~\name{}}
\label{sec:experiments}
\subsection{Experiment Setup}
\label{sec:experiment_setup}
We use Qwen2.5-7B-Instruct~\citep{Qwen2.5} as the backbone model in our main experiments to validate the effectiveness of~\name{}, and study several ablations of~\name{}.
Additionally, we include Reinforce-Ada~\citep{Reinforce-Ada} and prioritized sampling (PS;~\citealp{Kimi1.5}) as two sampling-based baselines.
We use two math corpora as sources of seed queries: DAPO-Math~\citep{DAPO} and OpenR1-Math-220k (the default subset;~\citealp{OpenR1}). 
Please see more details in Appendix~\ref{appendix:implementation_details}.

\textbf{Evaluation benchmarks.}
We evaluate our models on a set of held-out benchmarks spanning competition math and broader math reasoning:
AIME24~\citep{huggingfaceh4_aime2024}, AIME25~\citep{opencompass_aime2025}, AMC23~\citep{mathai_amc23}, GPQA~\citep{GPQA}, MATH500~\citep{DBLP:conf/iclr/LightmanKBEBLLS24}, MinervaMath~\citep{MinervaMath}, and OlympiadBench~\citep{OlympiadBench}.
Dataset statistics are shown in Table~\ref{tab:benchmark-sizes}.
We report Avg@16, defined as the mean correctness over 16 rollouts, as our evaluation metric.

\subsection{Evaluation Results}
\label{sec:main_results}
Results are reported in Table~\ref{table:main-evaluation-results}, and our findings are:

\textbf{Overall gains from the full \name{} design.}
Across both training corpora and both optimizers,~\name{} achieves the strongest \emph{average} performance.
On \textsc{DAPO-Math}, it reaches \textbf{51.1} under GRPO (vs.\ 46.5; +4.6) and \underline{49.3} under DAPO (vs.\ 46.1; +3.2); on \textsc{OpenR1-Math}, it improves GRPO to \underline{44.0} (vs.\ 39.9; +4.1) and DAPO to \underline{48.5} (vs.\ 45.0; +3.5).
Reinforce-Ada is a strong GRPO baseline (50.3 on \textsc{DAPO-Math} and 43.8 on \textsc{OpenR1-Math}), but remains below \name{} on average, despite occasionally winning on one or two benchmarks.
We report additional three-seed robustness and teacher-quality analyses in Appendices~\ref{app:multi_seed} and~\ref{app:teacher_quality}: aggregate gains exceed seed variability, accepted augmented queries have a \textbf{1.2\%} estimated wrong-label rate after filtering, and synthetic label-noise degradation is gradual.

\textbf{Ablations: sampling or augmentation alone is not sufficient.}
Adding only prioritized sampling (PS) or only policy augmentation (PA) yields smaller and less consistent gains than the full \name{} stack.
For example, on \textsc{DAPO-Math} with GRPO, PS+PA reaches 48.5, still below 51.1 with \name{}, and on \textsc{OpenR1-Math} with DAPO, PS+PA reaches 46.5 compared with 48.5 for \name{}.
These results suggest that the gains arise from the \emph{coupled} design---frontier-focused sampling together with on-policy pool growth and stable integration---rather than from any single component in isolation.

\textbf{Comparison with direct difficulty-based curriculum.}
We further compare against a PCL-style curriculum baseline that uses the updated policy as an online difficulty estimator to select intermediate-difficulty prompts from a fixed pool.
This baseline is strong and directly aligned with the frontier-tracking motivation, reaching 50.3 Avg@16 on \textsc{DAPO-Math} and 43.5 on \textsc{OpenR1-Math} under GRPO, but \name{} remains competitive or better with 51.1 and 44.0 respectively.
This suggests that intermediate-difficulty selection is important, but the evolving-pool mechanism and lineage-aware reinsertion provide additional benefits beyond fixed-pool curriculum selection.

\textbf{Topology-aware pool statistics re-estimation strengthens frontier tracking.}
Under the same sampling strategy, \emph{PathAgg} consistently matches or exceeds \emph{ChildAgg} on average.
For instance, on \textsc{DAPO-Math} with GRPO, PathAgg improves over ChildAgg by +2.4 average points (51.1 vs.\ 48.7), and on \textsc{OpenR1-Math} with GRPO by +1.7 (44.0 vs.\ 42.3).
This supports that correlated augmentations can distort sampling priorities, and that topology-aware propagation yields a smoother and more reliable pool statistic $\tilde r$ for boundary sampling.

\begin{table*}[t]
\centering
\resizebox{\textwidth}{!}{%
\begin{tabular}{@{}lp{2.8cm}|ccccccc|c@{}}
\toprule
Backbone & Method & AIME24 & AIME25 & AMC23 & GPQA & MATH500 & MinervaMath & Olym.Bench & Avg. \\
\midrule
Qwen3-0.6B & GRPO & 5.9 & 3.3 & 28.7 & 31.1 & 40.9 & 6.6 & 14.2 & 18.7 \\
Qwen3-0.6B & GRPO (w. PS) & 6.7 & 3.9 & 30.6 & \underline{31.8} & 43.1 & 8.0 & 14.6 & 19.8 \\
Qwen3-0.6B & \textbf{\name{}} & \underline{7.3} & \underline{4.3} & \underline{31.9} & 31.6 & \underline{44.6} & \underline{8.9} & \underline{14.9} & \underline{20.5} \\
\midrule
Qwen3-1.7B & GRPO & 10.0 & \underline{6.2} & 39.8 & 32.3 & 58.0 & 14.8 & 26.5 & 26.8 \\
Qwen3-1.7B & GRPO (w. PS) & 11.0 & \underline{6.2} & \underline{40.6} & 32.9 & 62.0 & 15.6 & 28.2 & 28.1 \\
Qwen3-1.7B & \textbf{\name{}} & \underline{11.6} & \underline{6.2} & 40.3 & \underline{33.3} & \underline{64.7} & \underline{16.1} & \underline{29.4} & \underline{28.8} \\
\midrule
Qwen3-4B & GRPO & 20.4 & 18.1 & \underline{58.9} & 41.8 & 59.3 & 21.0 & 33.1 & 36.1 \\
Qwen3-4B & GRPO (w. PS) & 21.9 & \underline{19.6} & 58.7 & 43.3 & 70.9 & 28.0 & 41.1 & 40.5 \\
Qwen3-4B & \textbf{\name{}} & \underline{22.9} & 19.3 & 58.6 & \underline{44.3} & \underline{78.7} & \underline{32.6} & \underline{46.5} & \underline{43.3} \\
\midrule
Qwen3-8B & GRPO & 25.0 & 21.7 & 68.0 & 45.6 & 70.8 & 28.9 & 39.4 & 42.8 \\
Qwen3-8B & GRPO (w. PS) & 25.3 & 21.9 & 68.6 & \textbf{\underline{50.0}} & 76.8 & 32.4 & 45.6 & 45.8 \\
Qwen3-8B & \textbf{\name{}} & \textbf{\underline{25.6}} & \textbf{\underline{22.0}} & \textbf{\underline{69.1}} & 49.3 & \textbf{\underline{81.8}} & \textbf{\underline{35.3}} & \textbf{\underline{50.7}} & \textbf{\underline{47.7}} \\
\bottomrule
\end{tabular}
}
\caption{Parameter scaling analysis on Qwen3 backbones with GRPO, GRPO+prioritized sampling (PS), and~\name{} on seven benchmarks.}
\vspace{-0.1in}
\label{table:parameter-scaling}
\end{table*}

\section{Analysis}
\subsection{Scaling Transfer and Query-Source Analysis}
\label{sec:analysis:scaling_and_dynamics}
A key design goal of \name{} is to be \emph{optimizer-agnostic} and \emph{backbone-agnostic} in implementation: the same query-lifecycle mechanisms can be plugged into different RLVR recipes and model backbones.
To test this, we transfer \name{} to a different model family (Qwen3;~\citealp{Qwen3}) at four parameter scales, and further analyze how verified on-policy augmentations differ from seed queries in terms of reward and advantage signals over training.

\textbf{Setup.}
For each Qwen3 backbone, we train with the same GRPO recipe under three variants: (i) the original GRPO sampling baseline, (ii) GRPO with prioritized sampling (PS), and (iii) GRPO with \name{} replacing the query sourcing and sampling subsystem.
We evaluate all trained models on seven benchmarks and report Avg@16.
In addition, we log query-source statistics during training by separating batches into \emph{seed queries} and \emph{teacher-verified augmented queries}, and track their (i) mean reward and (ii) advantage-related signal over steps to understand how on-policy pool growth changes the learning signal distribution.

\begin{figure}[t]
    \centering
    \includegraphics[width=\linewidth]{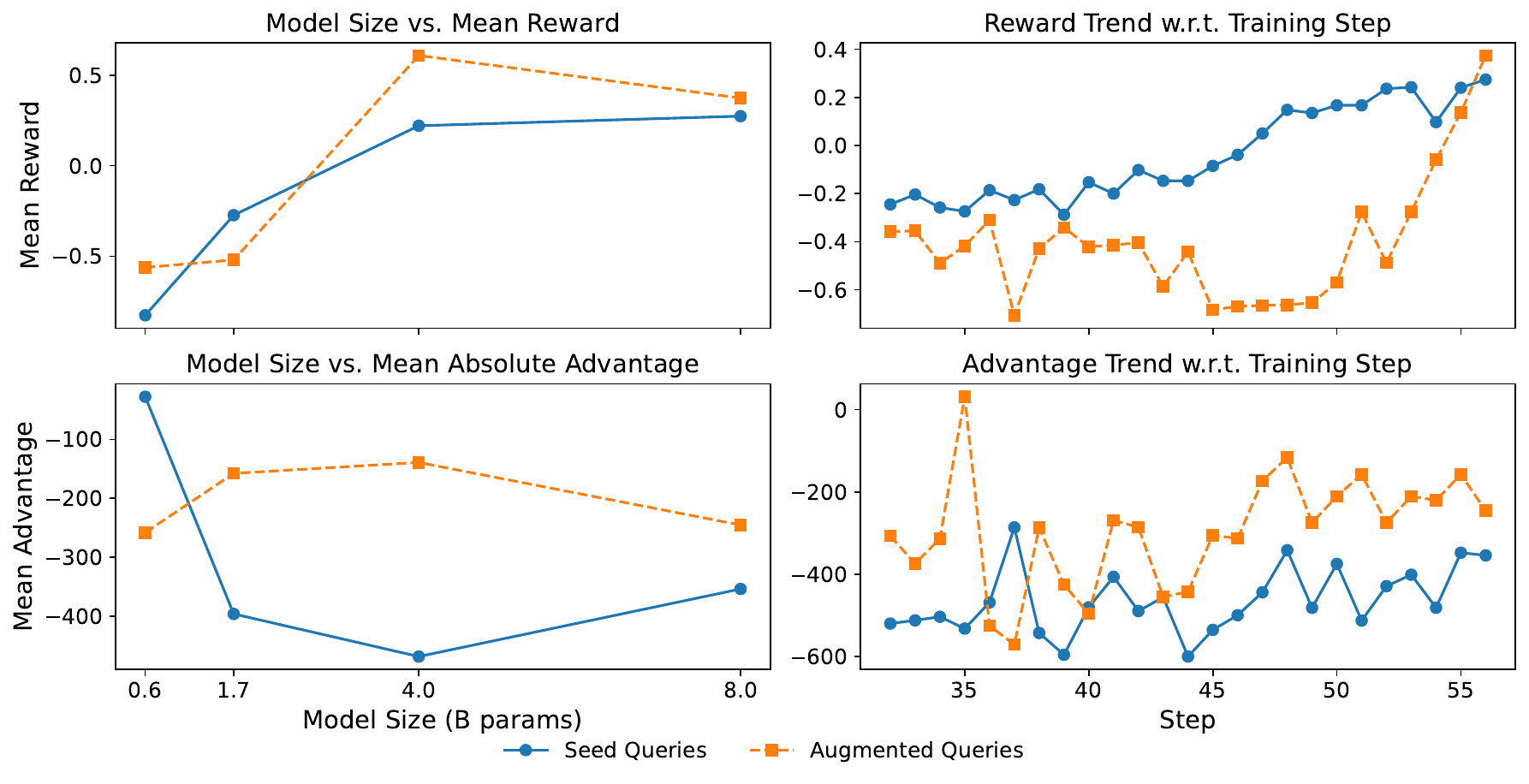}
    \caption{Comparison of seed and augmented query statistics across model scales and training steps.}
    \vspace{-0.1in}
    \label{fig:query-reward-advantage-2x2}
\end{figure}

\textbf{Results.}
Table~\ref{table:parameter-scaling} shows that \name{} consistently improves over both GRPO and GRPO+PS at all four scales, indicating that the method transfers cleanly to a different backbone family.
On Avg@16, \name{} improves over GRPO by +1.8 (0.6B), +2.0 (1.7B), +7.2 (4B), and +4.9 (8B), and it also outperforms GRPO+PS at every scale.
The largest gains appear at mid-to-large scales (4B/8B), where \name{} yields broad improvements across benchmarks, consistent with the intuition that better frontier tracking and stable on-policy pool growth become increasingly valuable as the model becomes capable of solving a larger fraction of the pool.
Complementing the aggregate benchmark results, Figure~\ref{fig:query-reward-advantage-2x2} reveals a clear query-source pattern: augmented queries tend to start with lower reward than seed queries (harder/less familiar), but their reward rises sharply over training and with model scale, indicating that verified on-policy augmentation progressively supplies frontier-aligned queries that become learnable as the policy improves; the advantage-related signal exhibits a corresponding shift, suggesting that augmented queries increasingly contribute informative (non-saturated) rollout groups rather than collapsing to all-correct or all-incorrect outcomes.

\begin{figure}[t]
    \centering
    \includegraphics[width=\linewidth]{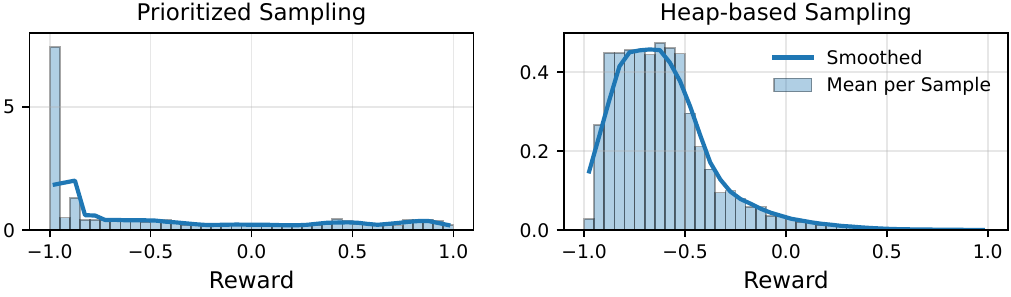}
    \caption{Comparison of probability mass over reward for two sampling strategies. Rewards are computed by averaging rollouts with a frozen Qwen2.5-7B-Instruct model.}
    \label{fig:sampling-mass-threepanels}
\end{figure}
\subsection{Query Sampling Probability Distribution}
\label{sec:analysis:sampling_mass}
An important design goal of \name{} is to focus training on the evolving capability frontier, i.e., medium-difficulty queries that are neither always correct nor always incorrect. 
This is especially important for group-based RLVR, since medium-difficulty queries tend to produce mixed-success rollout groups and thus more informative advantage signals than saturated all-correct or all-wrong groups. 
To diagnose whether a sampler exhibits this behavior, we freeze Qwen2.5-7B-Instruct to define a fixed reward landscape, repeatedly sample queries from the same pool snapshot under each strategy, and compute rewards with identical verification/decoding settings. 
Figure~\ref{fig:sampling-mass-threepanels} shows that prioritized sampling concentrates probability mass on the extreme low-reward tail (over-selecting very hard queries), whereas our heap-based boundary sampling shifts mass toward a medium-reward band, consistent with the intended boundary-focused curriculum.

\begin{wrapfigure}{l}{0.5\linewidth}
    \centering
    \vspace{-0.2in}
    \includegraphics[width=\linewidth]{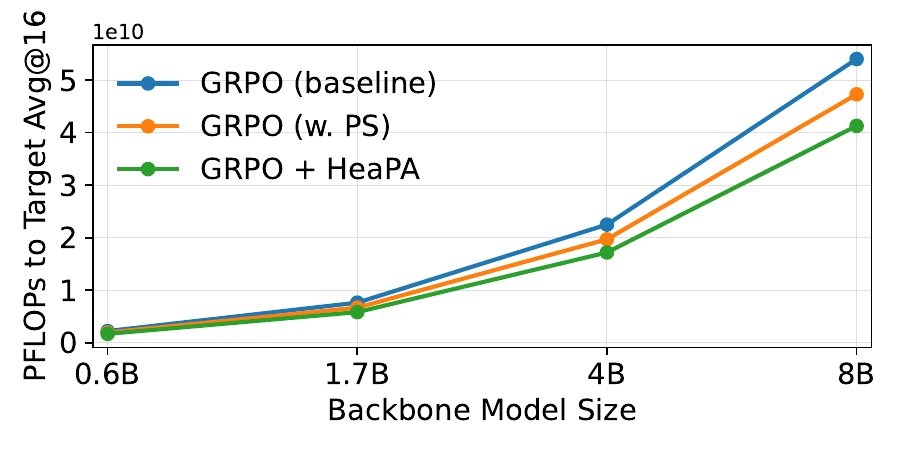}
    \vspace{-0.3in}
    \caption{Training compute (PFLOPs) required to \emph{first reach} the baseline GRPO target performance (final Avg@16 of the baseline run) for Qwen3 backbones at four scales. Lower is better.}
    \vspace{-0.2in}
    \label{fig:flops_to_target_qwen3}
\end{wrapfigure}
\subsection{Training Cost and Efficiency}
We further analyze the cost and efficiency of \name{} on Qwen3 backbones at four scales (0.6B/1.7B/4B/8B), comparing the baseline GRPO pipeline, GRPO with prioritized sampling (PS), and GRPO with \name{}.
Since RLVR is typically dominated by rollout generation, we focus on (i) the \emph{training compute} needed to reach a fixed target performance and (ii) the \emph{runtime overhead} introduced by pool maintenance and augmentation handling.

\textbf{PFLOPs to reach the baseline target.}
For each scale, we set the target as the final Avg@16 achieved by the baseline run, and measure the total training PFLOPs required by each method to \emph{first} reach this target under the same evaluation protocol.
Figure~\ref{fig:flops_to_target_qwen3} plots PFLOPs-to-target across scales: \name{} consistently reduces compute to reach the baseline target, suggesting that boundary sampling and on-policy pool growth improve sample-efficiency.

\textbf{Wall-clock and teacher-side overhead.}
Averaged across all scales, \name{} increases per-step wall-clock time by only \textbf{2.1\%}, since heap operations are lightweight and teacher verification runs asynchronously outside the rollout/update critical path.
The teacher is used only as an answer-level annotator for newly generated candidate queries, rather than for dense rollout supervision or token-level distillation.
When counted as separate auxiliary service time, asynchronous teacher annotation takes about \textbf{13.1\%} of the main training time; teacher-quality and label-noise analyses are reported in Appendix~\ref{app:teacher_quality}.

\section{Conclusions}
In this paper, we introduce \name{}, a query-lifecycle framework for RLVR that maintains a bounded, evolving prompt pool, tracks the moving capability frontier via heap-based boundary sampling, and grows the pool on-policy through augmentation with lightweight asynchronous verification. 
Empirical results show consistent accuracy gains and improved training efficiency, and ablations confirm that the strongest improvements come from coupling frontier-focused sampling with on-policy pool growth while stabilizing correlated inserts via topology-aware pool statistics re-estimation. 
We hope \name{} becomes a plug-and-play foundation for scalable RLVR post-training, improving the efficiency of rollout allocation and accelerating reliable reasoning progress as training scales.


\clearpage
\section*{Ethics Statement}
This paper presents work whose goal is to advance the efficiency and stability of reinforcement learning with verifiable rewards (RLVR) for large language models.
Our contribution focuses on training-time prompt sampling and on-policy data augmentation with verifiable answers, and does not introduce new model capabilities beyond those enabled by stronger RLVR pipelines.
We do not anticipate distinct ethical concerns specific to the proposed sampling and data-management framework beyond the broader considerations that apply to developing and deploying language models.
We encourage practitioners to follow established best practices for responsible model development and deployment, including careful evaluation, safety mitigations, and appropriate usage policies.




\bibliography{colm2026_conference}
\bibliographystyle{colm2026_conference}

\clearpage
\appendix

\begin{algorithm}[t]
\caption{The \name{} Training Algorithm}
\label{alg:heapa}
\footnotesize
\begin{algorithmic}[1]
\Require Seed data $\mathcal{D}_0$, initial policy $\pi_\theta$, verifier $R$, teacher $\mathcal{T}$,
pool capacity $N$, batch size $B$, group size $n$, augmentations $n_{\text{aug}}$, steps $T$
\State Initialize bounded dual-heap pool $\mathcal{P}_0$ with capacity $N$ (\S\ref{sec:method:pool})
\State Insert seed records $(x,g(x))\in\mathcal{D}_0$ into the \textbf{cold} queue of $\mathcal{P}_0$ (unscored), subsampling if $|\mathcal{D}_0|>N$
\State Start asynchronous verification workers connected to teacher $\mathcal{T}$ (\S\ref{sec:method:aug})
\State Initialize archive $\mathcal{A}\leftarrow \emptyset$ and lineage graph $\mathcal{G}\leftarrow \emptyset$
\For{$t = 1$ \textbf{to} $T$}
    \State Drain teacher-annotated results for pending candidate prompts; for each verified candidate prompt $x^{\text{aug}}$,
    form $(x^{\text{aug}}, g(x^{\text{aug}}))$ and insert it into the \textbf{cold} queue of $\mathcal{P}_{t-1}$ (with $\tilde r_k$ undefined)
    \If{\textsc{ShouldRecycle}$(t,\mathcal{P}_{t-1},\mathcal{A})$}
        \State $\{\tilde r_k^{\text{ref}}\}_{q_k\in\mathcal{A}} \gets \textsc{RefreshPoolStatistic}(\mathcal{G}, \mathcal{A})$
        \Comment{\S\ref{sec:method:refresh}}
        \State \textsc{ReinsertBatched}$(\mathcal{P}_{t-1}, \mathcal{A}, \{\tilde r_k^{\text{ref}}\})$
        \Comment{\S\ref{sec:method:reinsert}}
    \EndIf
    \State $\mathcal{B}_t \gets \textsc{BoundarySample}(\mathcal{P}_{t-1}, B)$
    \Comment{Induces $p_t$ in Eq.~\eqref{eq:pool_dist}}
    \State Sample $n$ rollouts and rewards with snapshot $\pi_{\text{old}}\!\leftarrow\!\pi_\theta$:
    $y_{i,1:n}\!\sim\!\pi_{\text{old}}(\cdot\mid x_i)$, $r_{i,j}\!\gets\!R(x_i,y_{i,j},g(x_i))$
    \State Update per-record $\tilde r$; move trained records $\mathcal{P}\!\rightarrow\!\mathcal{A}$; update $\theta$ with group-based RLVR
    \If{\textsc{AugmentOn}$(t)$ \textbf{and} $|\mathcal{P}_{t-1}| < N$}
        \State Propose augmented queries $\tilde{\mathcal{B}} \gets \textsc{Augment}(\pi_\theta, \mathcal{B}_t, n_{\text{aug}})$
        \State Add parent$\rightarrow$child edges to $\mathcal{G}$; enqueue $\tilde{\mathcal{B}}$ for asynchronous teacher verification
    \EndIf
    \State We update the pool in-place; denote the resulting pool after step $t$ as $\mathcal{P}_t$
\EndFor
\State \Return trained policy $\pi_\theta$
\end{algorithmic}
\end{algorithm}

\section{Implementation Details}
\label{appendix:implementation_details}
For training datasets, we filter out non-English queries, queries with non-numerical answers, and queries longer than 2,048 tokens.
We use the problem statement as the prompt $x$ and the final answer as the ground truth $g(x)$.
After filtering, DAPO-Math contains $\approx 14$K prompts and OpenR1-Math-220k contains $\approx 93$K prompts for training.

We build all training code on \textsc{verl}\footnote{\href{https://github.com/verl-project/verl}{https://github.com/verl-project/verl}}.
For our main experiments, we use Qwen/Qwen2.5-7B-Instruct as the backbone model.
We train for 500 steps with DAPO and 1{,}000 steps with GRPO, using a batch size of 512 prompts and generating 16 rollouts per prompt.
The learning rate is set to $10^{-6}$ with cosine decay and no warm-up.
We set the maximum prompt length to 2{,}048 and the maximum response length to 20{,}480 (12K for Qwen3).
KL loss is applied only during GRPO training, with a coefficient of $10^{-3}$.
All other parameters not mentioned above are kept the same as in the original GRPO/DAPO training scripts~\citep{GRPO,DAPO}.

For \name{}-specific hyperparameters, we set the query pool capacity to 100,000 and generate two augmentations per source query.
We use \textit{GPT-5-nano} as the teacher model for asynchronous verification, chosen for its balance of capability and cost efficiency; manual inspection indicates it provides sufficiently accurate ground-truth answer annotations.
For query augmentation, we set the generation token limit to 512.
All of our models are trained using a total of 96 NVIDIA H200 GPUs.

\section{Multi-Seed Robustness}
\label{app:multi_seed}
Small benchmarks such as AIME24, AIME25, and AMC23 can exhibit non-trivial seed-induced variability.
To assess whether the main trends are robust, we rerun the key GRPO configurations using three training seeds: 0, 999, and 2026.
The training corpus, backbone, rollout budget, evaluation prompts, decoding protocol, and number of training steps are kept fixed.
Table~\ref{tab:multi_seed_robustness} reports mean and standard deviation across training seeds.

\begin{table*}[t]
\centering
\resizebox{\textwidth}{!}{
\begin{tabular}{@{}lllrcccc@{}}
\toprule
Training Dataset & Optimizer & Method & \# Seeds & AIME24 & AIME25 & AMC23 & Avg. \\
\midrule
\textsc{DAPO-Math} & GRPO & GRPO & 3 & $16.8 \pm 0.7$ & $17.6 \pm 0.5$ & $74.9 \pm 0.9$ & $46.1 \pm 0.4$ \\
\textsc{DAPO-Math} & GRPO & Reinforce-Ada & 3 & $22.7 \pm 0.8$ & $19.2 \pm 0.6$ & $81.8 \pm 0.7$ & $49.8 \pm 0.5$ \\
\textsc{DAPO-Math} & GRPO & \name{} + PathAgg & 3 & $21.9 \pm 0.6$ & $21.6 \pm 0.7$ & $83.1 \pm 0.8$ & $\mathbf{51.4 \pm 0.4}$ \\
\midrule
\textsc{OpenR1-Math} & GRPO & GRPO & 3 & $15.9 \pm 0.9$ & $12.1 \pm 0.6$ & $60.8 \pm 0.8$ & $39.5 \pm 0.5$ \\
\textsc{OpenR1-Math} & GRPO & Reinforce-Ada & 3 & $20.4 \pm 0.7$ & $16.9 \pm 0.8$ & $66.9 \pm 0.6$ & $44.2 \pm 0.4$ \\
\textsc{OpenR1-Math} & GRPO & \name{} + PathAgg & 3 & $21.1 \pm 0.6$ & $15.9 \pm 0.7$ & $69.2 \pm 0.8$ & $\mathbf{44.5 \pm 0.5}$ \\
\bottomrule
\end{tabular}
}
\caption{
Multi-seed robustness results for key GRPO configurations using Qwen2.5-7B-Instruct.
We report mean $\pm$ standard deviation across three training seeds.
}
\label{tab:multi_seed_robustness}
\end{table*}

The multi-seed results support the main conclusion.
On \textsc{DAPO-Math}, \name{} improves the 7-benchmark average over GRPO by +5.3 points and over Reinforce-Ada by +1.6 points, which are larger than the observed standard deviations.
On \textsc{OpenR1-Math}, \name{} improves over GRPO by +5.0 points, while the gap to Reinforce-Ada is smaller (+0.3 points).
Thus, we interpret the OpenR1-Math comparison with Reinforce-Ada as competitive rather than a large win.
Overall, the aggregate trend remains stable: \name{} consistently improves over the original GRPO baseline and remains competitive with or stronger than strong adaptive sampling baselines.

\section{Additional Related Baselines: PCL-style Curriculum and SvS-style Augmentation}
\label{app:related_baselines}

We add two directly related baselines to further clarify the design space around \name{}:
(i) a PCL-style curriculum baseline that selects intermediate-difficulty prompts using a policy-dependent difficulty signal, and
(ii) an SvS-style teacher-free answer-preserving augmentation baseline.
Both are evaluated under the same GRPO setup as the main experiments.

\subsection{PCL-style Curriculum Baseline}

Prompt curriculum learning (PCL) selects intermediate-difficulty prompts for the current policy.
In our implementation, the updated policy serves as the online value model or difficulty estimator for prompt selection.
The prompt pool is kept fixed, so this baseline isolates difficulty-based curriculum selection from \name{}'s evolving-pool mechanism.
Table~\ref{tab:pcl_style_baseline} compares the PCL-style baseline with GRPO, Reinforce-Ada, and \name{}.

\begin{table*}[t]
\centering
\resizebox{\textwidth}{!}{
\begin{tabular}{@{}lllccc@{}}
\toprule
Training Dataset & Optimizer & Method & Difficulty Signal & Pool Growth? & Avg@16 \\
\midrule
\textsc{DAPO-Math} & GRPO & GRPO & Uniform sampling & No & 46.5 \\
\textsc{DAPO-Math} & GRPO & Reinforce-Ada & Adaptive rollout uncertainty & No & 50.3 \\
\textsc{DAPO-Math} & GRPO & PCL-style Curriculum & Policy-dependent prompt difficulty & No & 50.3 \\
\textsc{DAPO-Math} & GRPO & \name{} + PathAgg & Empirical heap-boundary statistic & Yes & \textbf{51.1} \\
\midrule
\textsc{OpenR1-Math} & GRPO & GRPO & Uniform sampling & No & 39.9 \\
\textsc{OpenR1-Math} & GRPO & Reinforce-Ada & Adaptive rollout uncertainty & No & 43.8 \\
\textsc{OpenR1-Math} & GRPO & PCL-style Curriculum & Policy-dependent prompt difficulty & No & 43.5 \\
\textsc{OpenR1-Math} & GRPO & \name{} + PathAgg & Empirical heap-boundary statistic & Yes & \textbf{44.0} \\
\bottomrule
\end{tabular}
}
\caption{
Comparison with a PCL-style fixed-pool curriculum baseline.
The PCL-style baseline selects intermediate-difficulty prompts using an online policy-dependent difficulty estimate, while \name{} additionally supports on-policy pool growth and lineage-aware reinsertion.
}
\label{tab:pcl_style_baseline}
\end{table*}

The PCL-style curriculum is a strong and directly relevant baseline.
On \textsc{DAPO-Math}, it reaches 50.3 Avg@16, matching Reinforce-Ada and substantially improving over GRPO.
On \textsc{OpenR1-Math}, it reaches 43.5 Avg@16, close to Reinforce-Ada's 43.8.
\name{} remains competitive or better in both settings, reaching 51.1 and 44.0 respectively.
This suggests that intermediate-difficulty selection is important, but selection alone does not fully explain the gains; evolving-pool growth and stable reintegration of correlated augmentations provide additional benefits.

\subsection{SvS-style Teacher-Free Answer-Preserving Augmentation}

We also compare against a teacher-free answer-preserving augmentation baseline inspired by self-play with variational synthesis.
For this baseline, augmented queries are generated only from prompts for which the current policy has produced at least one correct solution.
The augmentation instruction is constrained to preserve the original final answer, so no teacher annotation is used.
The resulting answer-preserving queries are inserted into the same cold queue and empirically scored after their first verified rollout group.
Table~\ref{tab:svs_style_baseline} compares teacher-free augmentation alone and inside the \name{} pool-management framework.

\begin{table*}[t]
\centering
\resizebox{\textwidth}{!}{
\begin{tabular}{@{}lccccc@{}}
\toprule
Method & Teacher Needed? & Answer Type & Heap Boundary Sampling? & PathAgg / Reinsertion? & DAPO-Math / OpenR1-Math Avg@16 \\
\midrule
GRPO & No & Original seed answer & No & No & 46.5 / 39.9 \\
GRPO + PA & Yes & Teacher-annotated new answer & No & No & 46.4 / 41.0 \\
GRPO + SvS-style PA & No & Preserved original answer & No & No & 47.8 / 42.4 \\
\name{} w/ SvS-style PA & No & Preserved original answer & Yes & Yes & 51.0 / 42.9 \\
\name{} & Yes & Teacher-annotated new answer & Yes & Yes & \textbf{51.1 / 44.0} \\
\bottomrule
\end{tabular}
}
\caption{
Comparison with SvS-style teacher-free answer-preserving augmentation under GRPO.
The two numbers in the last column denote Avg@16 on \textsc{DAPO-Math} and \textsc{OpenR1-Math}, respectively.
}
\label{tab:svs_style_baseline}
\end{table*}

The results provide two clarifications.
First, teacher-free answer-preserving augmentation is useful, improving GRPO from 46.5 to 47.8 on \textsc{DAPO-Math} and from 39.9 to 42.4 on \textsc{OpenR1-Math}.
Second, \name{}'s pool-management mechanism remains beneficial even when augmentation is teacher-free: adding heap boundary sampling, PathAgg, and controlled reinsertion raises the SvS-style variant to 51.0 on \textsc{DAPO-Math} and 42.9 on \textsc{OpenR1-Math}.
The comparison between \name{} w/ SvS-style PA and full \name{} further clarifies the role of teacher annotation.
On \textsc{DAPO-Math}, the two variants are nearly tied, while on \textsc{OpenR1-Math}, full \name{} improves from 42.9 to 44.0.
This suggests that the main contribution is the \name{} query lifecycle, while teacher-annotated new-answer augmentation provides additional flexibility in settings where answer-preserving transformations are less sufficient.

\section{Teacher Quality and Label-Noise Robustness}
\label{app:teacher_quality}

\name{} uses the teacher only as an answer-level annotator for newly generated candidate queries.
The teacher does not provide dense rollout rewards, token-level logits, process supervision, or preference labels.
Before insertion into the cold queue, each candidate must pass parsability, solvability, and dataset-format filters.
Accepted candidates are then inserted as cold records and receive empirical rollout-derived pool statistics only after their first verified rollout group.

\subsection{Teacher Filtering and Manual Audit}
During training, every policy-generated candidate query is sent to the asynchronous teacher for final-answer annotation.
We log the total number of generated candidates, acceptance rate, rejection reasons, and a manual audit of accepted candidates.
The audit checks whether the teacher-provided answer is consistent with the generated problem statement and verifier format.
Table~\ref{tab:teacher_quality} summarizes the results.

\begin{table*}[t]
\centering
\resizebox{\textwidth}{!}{
\begin{tabular}{@{}lrr|rrr|rr@{}}
\toprule
\multirow{2}{*}{Training Corpus}
& \multicolumn{2}{c|}{Candidate Flow}
& \multicolumn{3}{c|}{Rejected Candidates}
& \multicolumn{2}{c}{Manual Audit} \\
\cmidrule(lr){2-3} \cmidrule(lr){4-6} \cmidrule(lr){7-8}
& \# Candidates & Accepted (\%)
& Unparsable (\%) & Ambig./Unsolv. (\%) & Format (\%)
& Audit Size & Wrong Label (\%) \\
\midrule
\textsc{DAPO-Math} & 44K  & 71.3 & 2.3 & 23.1 & 3.3 & 200 & 1.5 \\
\textsc{OpenR1-Math} & 237K & 68.5 & 0.8 & 28.6 & 2.1 & 200 & 0.8 \\
\midrule
Overall & 281K & 68.9 & 1.0 & 27.7 & 2.3 & 400 & 1.2 \\
\bottomrule
\end{tabular}
}
\caption{
Teacher filtering and manual audit results for policy-generated candidate queries using GPT-5-nano as the asynchronous answer annotator.
Rejected candidates are grouped by rejection reason: unparsable final-answer format, ambiguous or unsolvable problem statement, and dataset/verifier-format violation.
Wrong Label denotes the estimated wrong-label rate among accepted candidates from manual audit.
}
\label{tab:teacher_quality}
\end{table*}

The results show that a substantial fraction of generated candidates is filtered before entering training.
The largest rejection category is ambiguous or unsolvable candidates, reflecting the difficulty of using small policy models as query proposers.
After filtering, the estimated wrong-label rate among accepted candidates is low: 1.5\% on \textsc{DAPO-Math}, 0.8\% on \textsc{OpenR1-Math}, and 1.2\% overall.

\subsection{Synthetic Label-Noise Robustness}
We further test whether \name{} is robust to residual accepted-label noise.
Starting from accepted augmented queries, we synthetically corrupt a controlled fraction of their final answers.
The corruption is applied only to augmented queries, while seed-query labels remain unchanged.
We then train under the same GRPO recipe and evaluate Avg@16 on the same seven benchmarks.
This isolates the effect of wrong teacher labels from the other components of \name{}.

\begin{table*}[t]
\centering
\resizebox{0.85\textwidth}{!}{
\begin{tabular}{@{}lllrr@{}}
\toprule
Method & Training Dataset & Optimizer & Synthetic Noise Rate & Avg@16 \\
\midrule
\name{} & \textsc{DAPO-Math} & GRPO & 0\% & 51.1 \\
\name{} & \textsc{DAPO-Math} & GRPO & 1\% & 50.0 \\
\name{} & \textsc{DAPO-Math} & GRPO & 3\% & 49.4 \\
\name{} & \textsc{DAPO-Math} & GRPO & 5\% & 48.2 \\
\midrule
\name{} & \textsc{OpenR1-Math} & GRPO & 0\% & 44.0 \\
\name{} & \textsc{OpenR1-Math} & GRPO & 1\% & 43.0 \\
\name{} & \textsc{OpenR1-Math} & GRPO & 3\% & 42.5 \\
\name{} & \textsc{OpenR1-Math} & GRPO & 5\% & 41.6 \\
\bottomrule
\end{tabular}
}
\caption{
Robustness to synthetic corruption of accepted augmented-query labels.
Noise is injected only into accepted augmented labels, while seed labels are unchanged.
}
\label{tab:teacher_noise_robustness}
\end{table*}

The results show that \name{} is not immune to label noise, but its degradation is gradual under moderate synthetic corruption.
Since the measured accepted-label error rate after filtering is 1.2\% overall, the observed teacher quality is close to the lowest tested noise regime, where the method remains stable.

\section{Prompts Used for Augmentation and Verification}
In this section, we first document the prompt templates used by \name{} for (i) student rollouts, (ii) on-policy query augmentation, and (iii) asynchronous teacher verification.
We include the exact templates because small formatting differences can affect strict answer parsing, candidate acceptance rates, and the distribution of augmented data.

\subsection{Policy Rollout Prompt}
\label{app:prompts:solver}

\begin{tcolorbox}[promptbox,title={Student rollout prompt}]
\begin{PromptVerbatim}
<|im_start|>system
You are Qwen, created by Alibaba Cloud. You are a helpful assistant.<|im_end|>
<|im_start|>user
Solve the following math problem step by step.
The last line of your response should be of the form Answer: $Answer (without quotes)
where $Answer is the answer to the problem.

\textcolor{PromptIn}{<PROBLEM_STATEMENT>}

Remember to put your answer on its own line after "Answer:".<|im_end|>
<|im_start|>assistant
\end{PromptVerbatim}
\end{tcolorbox}

We use a fixed rollout instruction template (following the standard evaluation/training protocol of~\citet{DAPO}) and substitute
\textcolor{PromptIn}{\texttt{<PROBLEM\_STATEMENT>}} with the problem text $x$ when sampling rollouts $y\sim\pi_\theta(\cdot\mid x)$.
This template serves two practical purposes.
First, it normalizes the interaction format across models and methods (system/user roles and instruction wording), so the only changing input is the problem content.
Second, it enforces a \emph{deterministic answer delimiter}: the model is required to place the final answer on a separate last line beginning with \texttt{Answer:}.
Our verifier $R(\cdot)$ extracts the numeric string from this last line and applies strict exact matching against the ground-truth answer $g(x)$.
Keeping the rollout prompt identical across all ablations and baselines ensures that any performance differences are attributable to the query-lifecycle mechanisms (sampling, augmentation, refresh, reinsertion), rather than to formatting advantages or parsing artifacts.

\subsection{On-policy Augmentation Prompt}
\label{app:prompts:augment}

\begin{tcolorbox}[promptbox,title={On-policy augmentation prompt}]
\begin{PromptVerbatim}
You are an expert math problem writer.
Given an original problem, produce a NEW problem that is similar by changing ONLY 
numeric values (constants, coefficients, exponents, lengths, bounds) by small amounts. 
Keep topic/structure/variables/target the same.
Do NOT output any code or solutions to the problem. 
Do not generate any rationale or explanations.
Follow this format exactly, with no extra text.

\textcolor{PromptTag}{<ORIG>}
\textcolor{PromptIn}{<ORIGINAL_PROBLEM_TEXT_ONLY>}

\textcolor{PromptTag}{<NEW>}
\textcolor{PromptOut}{<NEW_PROBLEM_TEXT_ONLY>}

\textcolor{PromptTag}{<DIFF>}
\textcolor{PromptOut}{<NUMBER_IN_[0.75,1.33]>}

\textcolor{PromptTag}{<END>}
\end{PromptVerbatim}
\end{tcolorbox}

We apply this prompt to the \emph{current} policy $\pi_\theta$ to generate candidate new queries on-policy.
Given a sampled parent query $x$, we feed the problem statement (without the answer) into the \texttt{<ORIG>} field and instruct the model to produce a minimally edited variant in the \texttt{<NEW>} field.
The key constraint---``change ONLY numeric values''---is intentional: it preserves the underlying topic, symbolic structure, and target quantity, while introducing controlled novelty via small perturbations of constants, coefficients, bounds, and exponents.
This greatly reduces the rate of ill-posed generations (e.g., missing variables, contradictory constraints, or multi-question outputs) compared to free-form rewriting, while still producing new instances that can be frontier-aligned as the policy improves.

The strict tagged output format serves as a low-cost interface between generation and verification.
We parse the \textcolor{PromptOut}{\texttt{<NEW\_PROBLEM\_TEXT\_ONLY>}} span as the candidate augmented query $x^{\mathrm{aug}}$ and discard any outputs that violate the tag structure.
The scalar in \texttt{<DIFF>} is recorded as the \emph{policy-estimated relative difficulty} $d_{\mathrm{pol}}$ for $x^{\mathrm{aug}}$.
Intuitively, $d_{\mathrm{pol}}>1$ indicates that the generated problem is estimated harder than its parent and $d_{\mathrm{pol}}<1$ indicates easier.
We clamp $d_{\mathrm{pol}}$ to $[0.75,1.33]$ and treat it as auxiliary metadata; in particular, it can be used by the pool statistics refresh module to downweight contributions from children that the policy itself believes are harder, without requiring any extra supervision from the teacher~\cite{wang2023cat,wang2023car,wang2024candle,wang2025conceptualizationsurvey,DBLP:conf/acl/WangCLNX0SLGLYB25,wang2026arxiv2table,wang2025mars}.

\subsection{Teacher Verification Prompt}
\label{app:prompts:teacher}

\begin{tcolorbox}[promptbox,title={Teacher verification prompt (answer-only JSON)}]
\begin{PromptVerbatim}
You will be given ORIGINAL and GENERATION.

Decide whether GENERATION contains a single, self-contained, well-posed math problem 
whose final answer is a number. When judging well-posedness, 
ignore superficial wrappers such as "Question:", "Assistant:", code fences,
or a trailing "Answer:" line. You do NOT need to output any cleaned question text.

If solvable, compute ONLY the final numeric answer as a bare number string.

Return ONLY one JSON object on a single line:
\textcolor{PromptOut}{\{"solvable":true|false,"answer":"<string or null>"\}}
Rules: lowercase true/false/null
"answer" must be a bare number string when solvable, else null.

ORIGINAL:
\textcolor{PromptIn}{<ORIGINAL>}

GENERATION:
\textcolor{PromptIn}{<GENERATION>}
\end{PromptVerbatim}
\end{tcolorbox}

We use a strong teacher model $\mathcal{T}$ as an \emph{external answer annotator} for policy-generated candidates.
For each candidate query $x^{\mathrm{aug}}$ produced by the augmentation prompt, we asynchronously call $\mathcal{T}$ with the parent \texttt{ORIGINAL} and the raw candidate text \texttt{GENERATION}.
The teacher is asked to perform only two decisions: (i) whether the candidate corresponds to a single well-posed math problem with a numeric final answer, and (ii) if so, what that final numeric answer is.
To keep the interface robust to minor formatting artifacts introduced by the policy, we explicitly allow the teacher to ignore superficial wrappers such as \texttt{Question:} prefixes, role markers (e.g., \texttt{Assistant:}), Markdown code fences, or trailing \texttt{Answer:} lines.
Crucially, the teacher is \emph{not} asked to rewrite or ``clean'' the prompt text, and it returns \emph{only} a single-line JSON object.

After receiving the JSON response, we apply strict acceptance rules.
A candidate is accepted iff \texttt{solvable=true} and the \texttt{answer} field can be parsed as a bare numeric string.
Accepted candidates are converted into new verifiable training pairs $(x^{\mathrm{aug}}, g(x^{\mathrm{aug}}))$ with $g(x^{\mathrm{aug}})=\texttt{answer}$.
The resulting record is inserted into the pool as a cold (unscored) item, ensuring that it is sampled early to obtain a first rollout-based success statistic.
Candidates that are unsolvable, ambiguous, or yield non-numeric answers are discarded.
Because verification is performed asynchronously, the main RLVR loop does not block on teacher latency: verified results are drained whenever available and incrementally grow the pool over time.
Finally, any lightweight deterministic stripping needed for tokenization (e.g., removing wrappers or trailing \texttt{Answer:} blocks) is performed on our side before insertion; this keeps the teacher contract minimal and avoids coupling data quality to teacher rewriting behavior~\cite{yu2023folkscope,ding2024intentionqa,yang2026sessionintentbench,xu2024mind,shi2023qadynamics,DBLP:conf/emnlp/ZhengDTWBWS25,DBLP:conf/eacl/ChanCWJFLS24}.

\begin{table}[t]
\centering
\resizebox{\linewidth}{!}{
\begin{tabular}{@{}l|rrrrrrr@{}}
\toprule
Benchmark & AIME24 & AIME25 & AMC23 & GPQA & MATH500 & MinervaMath & OlympiadBench \\
\midrule
\#Entries & 30 & 30 & 40 & 448 & 500 & 272 & 150 \\
\bottomrule
\end{tabular}
}
\caption{Number of examples in each benchmark dataset.}
\label{tab:benchmark-sizes}
\end{table}

\section{Pool Statistics Propagation with the Lineage Graph}
\label{sec:analysis:propagation}
When on-policy augmentations are produced by small edits, many newly inserted queries are correlated through shared parents and templates.
If we reinsert archived records using only their most recent empirical success statistic, a burst of correlated children can temporarily distort sampling priorities and induce oscillations near the low/high boundary.
We therefore maintain an augmentation lineage graph and periodically \emph{refresh} each node's pool statistic by propagating signals from its descendants.

\textbf{Graph and notation.}
Let $\mathcal{G}=(\mathcal{V},\mathcal{E})$ be the lineage graph, where each node $q\in\mathcal{V}$ is a query record and each directed edge $(q \to c)\in\mathcal{E}$ indicates that $c$ is a child augmentation of $q$.\footnote{In implementation we store parent-to-children adjacency.
All results below are stated for a rooted tree for clarity; the path-count weighting extends directly to DAGs by replacing ``\#leaves'' with ``\#root-to-leaf paths'' as described below.}
Write $\mathcal{C}(q)$ for the children of $q$.
Let $\tilde r_q$ denote the stored pool statistic for $q$ (e.g., group pass rate or its refreshed value), and let $d_c\in[d_{\min},d_{\max}]$
be the \emph{policy-estimated relative difficulty} attached to child $c$ at augmentation time (the \texttt{<DIFF>} field in Appendix~\ref{app:prompts:augment}); for seeds or missing metadata we set $d_c\!\leftarrow\!1$.
We use a generic difficulty-adjusted child signal
\begin{equation}
\phi(\tilde r_c,d_c)\in[-1,1],
\label{eq:phi_def}
\end{equation}
which in our implementation is a sign-aware scaling followed by clipping to $[-1,1]$ (see \S\ref{sec:method:refresh}).

\textbf{Two propagation modes.}
We refresh parent scores by a convex blend of their current value and an aggregate of descendant signals:
\begin{equation}
\tilde r_q \leftarrow (1-\lambda)\,\tilde r_q + \lambda \cdot \mathrm{Agg}\Big(\{\,\phi(\tilde r_c,d_c)\,\}_{c\in \mathcal{C}(q)}\Big),
\qquad \lambda\in[0,1].
\label{eq:refresh_generic}
\end{equation}
We consider two choices of $\mathrm{Agg}(\cdot)$.

\smallskip
\noindent\textbf{(i) Children aggregation (ChildAgg).}
ChildAgg uses an unweighted mean across immediate children:
\begin{equation}
\mathrm{Agg}_{\textsc{Child}}(q)
= \frac{1}{|\mathcal{C}(q)|}\sum_{c\in\mathcal{C}(q)} \phi(\tilde r_c,d_c).
\label{eq:childagg}
\end{equation}
This treats each child as one ``vote,'' regardless of how large the child's subtree is.

\smallskip
\noindent\textbf{(ii) Path aggregation (PathAgg).}
PathAgg upweights children with larger subtrees, reflecting that a child with many descendants represents more downstream queries and thus
more evidence about the parent's region of the space.
Let $s(q)$ denote the number of leaf paths in the subtree of $q$.\footnote{For a tree, $s(q)$ is the number of leaves under $q$.
For a DAG, we use the number of root-to-leaf paths passing through $q$, computed bottom-up; this matches our implementation's leaf-count
logic and naturally handles shared descendants.}
Define normalized weights
\begin{equation}
w_{q\to c} \triangleq \frac{s(c)}{\sum_{c'\in\mathcal{C}(q)} s(c')}, \qquad \sum_{c\in\mathcal{C}(q)} w_{q\to c}=1,
\label{eq:path_weights}
\end{equation}
and set
\begin{equation}
\mathrm{Agg}_{\textsc{Path}}(q)
= \sum_{c\in\mathcal{C}(q)} w_{q\to c}\, \phi(\tilde r_c,d_c).
\label{eq:pathagg}
\end{equation}
We compute $s(\cdot)$ bottom-up once per refresh and then apply \eqref{eq:pathagg} in a bottom-up pass over depths.

\subsection{Why PathAgg is the Right Weighting}
\label{sec:analysis:pathagg_optimality}

PathAgg can be justified from two complementary viewpoints: (a) it estimates the mean descendant signal under a natural ``uniform-over-leaves''
(or ``uniform-over-paths'' in DAGs) objective, whereas ChildAgg estimates a different quantity that can be biased toward small subtrees; and
(b) under a simple noise model, PathAgg minimizes variance among unbiased aggregators.

\begin{proposition}[Unbiased estimation of the uniform-descendant objective]
\label{prop:unbiased}
Assume the lineage structure under a node $q$ is a rooted tree.
For each leaf $\ell$ under $q$, define its (difficulty-adjusted) leaf signal as
$z_\ell \in \mathbb{R}$ (e.g., $z_\ell = \phi(r_\ell,d_\ell)$ or a path-product variant).
Let $\mathcal{L}(q)$ be the set of leaves under $q$ and define the uniform-descendant objective
\begin{equation}
\mu(q) \triangleq \frac{1}{|\mathcal{L}(q)|}\sum_{\ell\in\mathcal{L}(q)} z_\ell.
\label{eq:uniform_leaf_mean}
\end{equation}
Partition leaves by the first edge from $q$: for each child $c\in\mathcal{C}(q)$, let $\mathcal{L}(c)$ be the leaves under $c$ and
$\mu(c)=\frac{1}{|\mathcal{L}(c)|}\sum_{\ell\in\mathcal{L}(c)} z_\ell$.
Then $\mu(q)$ decomposes as
\begin{equation}
\mu(q) = \sum_{c\in\mathcal{C}(q)} \frac{|\mathcal{L}(c)|}{|\mathcal{L}(q)|}\,\mu(c),
\label{eq:leaf_decompose}
\end{equation}
i.e., the unique unbiased combination of child means uses weights proportional to subtree sizes.
\end{proposition}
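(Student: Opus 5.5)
The identity \eqref{eq:leaf_decompose} is a regrouping of a finite sum, and I will prove it first. The key structural fact is that, in a rooted tree, the leaf sets of distinct children are disjoint and cover all leaves below $q$:
\begin{equation*}
\mathcal{L}(q)=\bigsqcup_{c\in\mathcal{C}(q)}\mathcal{L}(c).
\end{equation*}
This holds because every leaf $\ell$ under $q$ has a unique path from $q$. The first edge of that path identifies exactly one child $c$ with $\ell\in\mathcal{L}(c)$. Uniqueness of paths is precisely where the tree assumption is needed; in a DAG a leaf could be counted under two children. If $q$ is itself a leaf, the statement is vacuous or taken with $\mathcal{L}(q)=\{q\}$, so I assume $\mathcal{C}(q)\neq\emptyset$. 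From the partition we also get $|\mathcal{L}(q)|=\sum_{c}|\mathcal{L}(c)|$.

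With the partition in hand, the computation is
\begin{equation*}
\mu(q)=\frac{1}{|\mathcal{L}(q)|}\sum_{c\in\mathcal{C}(q)}\sum_{\ell\in\mathcal{L}(c)} z_\ell
=\sum_{c\in\mathcal{C}(q)}\frac{|\mathcal{L}(c)|}{|\mathcal{L}(q)|}\,\mu(c),
\end{equation*}
using $\sum_{\ell\in\mathcal{L}(c)} z_\ell=|\mathcal{L}(c)|\,\mu(c)$. The weights $|\mathcal{L}(c)|/|\mathcal{L}(q)|$ are exactly the PathAgg weights $w_{q\to c}$ of \eqref{eq:path_weights} with $s(c)=|\mathcal{L}(c)|$.

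The clause ``the unique unbiased combination'' needs its own interpretation, and I expect this to be the main obstacle because the statement leaves it informal. I would formalize it as follows. Consider estimators of the form $\sum_c a_c\,\mu(c)$ with fixed weights $a_c$ that do not depend on the $z_\ell$. Require this to equal $\mu(q)$ for every choice of leaf signals $\{z_\ell\}$, or equivalently to be unbiased for $\mu(q)$ whenever the $\mu(c)$ are unbiased estimates of arbitrary child means.

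Uniqueness then follows by testing on indicator configurations. Fix a child $c_0$ and set $z_\ell=1$ for $\ell\in\mathcal{L}(c_0)$ and $z_\ell=0$ otherwise. Then $\mu(c_0)=1$, $\mu(c)=0$ for every $c\neq c_0$, and $\mu(q)=|\mathcal{L}(c_0)|/|\mathcal{L}(q)|$. Matching the two sides forces $a_{c_0}=|\mathcal{L}(c_0)|/|\mathcal{L}(q)|$.

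Applying the same test to ChildAgg's uniform weights $1/|\mathcal{C}(q)|$ shows they satisfy the identity only when all child subtrees have equal leaf counts. This confirms the bias toward small subtrees described in the surrounding discussion.
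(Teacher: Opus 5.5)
Your proof is correct and follows the paper's argument for the identity: you partition the leaves under $q$ by child subtree, regroup the sum as $\sum_c |\mathcal{L}(c)|\,\mu(c)$, and divide by $|\mathcal{L}(q)|=\sum_c|\mathcal{L}(c)|$. Your indicator-configuration argument for the ``unique unbiased combination'' clause goes beyond the paper, whose proof establishes only the decomposition and does not prove uniqueness, so it is a worthwhile addition.
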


\begin{proof}
Each leaf under $q$ belongs to exactly one child subtree, so
\[
\sum_{\ell\in\mathcal{L}(q)} z_\ell
= \sum_{c\in\mathcal{C}(q)} \sum_{\ell\in\mathcal{L}(c)} z_\ell
= \sum_{c\in\mathcal{C}(q)} |\mathcal{L}(c)| \cdot \mu(c).
\]
Dividing both sides by $|\mathcal{L}(q)|=\sum_{c}|\mathcal{L}(c)|$ yields \eqref{eq:leaf_decompose}.
\end{proof}

Proposition~\ref{prop:unbiased} makes the core issue explicit:
ChildAgg in \eqref{eq:childagg} corresponds to weighting each child equally, which equals \eqref{eq:leaf_decompose} only when all child subtrees
have equal size.
When subtree sizes are skewed (a common outcome under on-policy augmentation where some templates are repeatedly edited), ChildAgg can systematically
underweight the larger subtree and overweight smaller ones.
PathAgg in \eqref{eq:pathagg} uses exactly the weights in \eqref{eq:leaf_decompose} (or their DAG analogue using path counts), and therefore matches
the uniform-descendant objective.

\begin{proposition}[Minimum-variance weighting under a simple noise model]
\label{prop:minvar}
Under the same setup as Proposition~\ref{prop:unbiased}, suppose each leaf signal is
$z_\ell = \mu_\ell + \varepsilon_\ell$ with independent noise $\varepsilon_\ell$ of zero mean and common variance $\sigma^2$.
Let $\widehat{\mu}(c)$ be the empirical mean of leaves under child $c$ (so $\mathrm{Var}[\widehat{\mu}(c)]=\sigma^2/|\mathcal{L}(c)|$).
Among all unbiased estimators of $\mu(q)$ of the form $\widehat{\mu}(q)=\sum_{c}\alpha_c \widehat{\mu}(c)$ with $\sum_c \alpha_c = 1$,
the choice $\alpha_c = |\mathcal{L}(c)|/|\mathcal{L}(q)|$ uniquely minimizes $\mathrm{Var}[\widehat{\mu}(q)]$.
\end{proposition}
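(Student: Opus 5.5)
This is a constrained quadratic minimization, handled by Cauchy--Schwarz or a Lagrange multiplier. Write $n_c \triangleq |\mathcal{L}(c)|$ and $n \triangleq |\mathcal{L}(q)| = \sum_c n_c$; the leaf sets $\mathcal{L}(c)$ partition $\mathcal{L}(q)$ as in Proposition~\ref{prop:unbiased}.

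\textbf{Variance of the estimator.} Since the leaf sets under distinct children are disjoint and the noises $\varepsilon_\ell$ are independent, the child means $\widehat{\mu}(c)$ are mutually independent. Hence
\[
\mathrm{Var}[\widehat{\mu}(q)] = \sigma^2 \sum_c \frac{\alpha_c^2}{n_c}.
\]

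\textbf{Unbiasedness.} We have $\mathbb{E}[\widehat{\mu}(q)] = \sum_c \alpha_c \bar\mu_c$, where $\bar\mu_c$ is the mean of the $\mu_\ell$ under $c$. The target is $\mu(q) = \sum_c (n_c/n)\,\bar\mu_c$ by Proposition~\ref{prop:unbiased}. For the estimator to be unbiased for arbitrary leaf means, the weights must therefore be exactly $\alpha_c = n_c/n$. The only admissible weighting is then the stated one, so uniqueness is immediate.

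\textbf{Main obstacle: the intended reading of unbiasedness.} The statement imposes only $\sum_c \alpha_c = 1$, which suggests it means unbiasedness in the weak sense that holds when all $\bar\mu_c$ equal a common value. I would make this reading explicit and then solve the genuine optimization: minimize $\sum_c \alpha_c^2/n_c$ subject to $\sum_c \alpha_c = 1$. By Cauchy--Schwarz,
\[
1 = \Bigl(\sum_c \tfrac{\alpha_c}{\sqrt{n_c}}\sqrt{n_c}\Bigr)^2 \le \Bigl(\sum_c \tfrac{\alpha_c^2}{n_c}\Bigr)\, n .
\]
So the variance is at least $\sigma^2/n$. Equality holds if and only if $\alpha_c/\sqrt{n_c} \propto \sqrt{n_c}$, that is, $\alpha_c \propto n_c$. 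Normalizing gives $\alpha_c = n_c/n$, which is unique by the equality case of Cauchy--Schwarz; equivalently, the objective is strictly convex and the Lagrange stationarity condition $2\alpha_c/n_c = \lambda$ has a single solution. At this optimum the estimator equals the pooled leaf mean, with variance $\sigma^2/n$.

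Both readings of unbiasedness lead to the same weights, and I would note this in the proof.
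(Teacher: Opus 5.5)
Your proof is correct and follows the paper's route. Both compute $\mathrm{Var}[\widehat{\mu}(q)]=\sigma^2\sum_c \alpha_c^2/|\mathcal{L}(c)|$ and minimize it subject to $\sum_c\alpha_c=1$, and your Cauchy--Schwarz equality case gives uniqueness directly, which the paper's Lagrange-multiplier stationarity leaves implicit since it tacitly needs strict convexity. Your remark that unbiasedness for arbitrary leaf means already forces $\alpha_c=|\mathcal{L}(c)|/|\mathcal{L}(q)|$, so the optimization only matters under the weaker ``$\sum_c\alpha_c=1$'' reading, is a valid point that the paper glosses over.
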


\begin{proof}
Unbiasedness requires $\sum_c \alpha_c = 1$ and targets $\mu(q)$ as a convex combination of child means.
The variance is
\[
\mathrm{Var}\!\left[\sum_c \alpha_c \widehat{\mu}(c)\right]
= \sum_c \alpha_c^2 \,\mathrm{Var}[\widehat{\mu}(c)]
= \sigma^2 \sum_c \frac{\alpha_c^2}{|\mathcal{L}(c)|}.
\]
Minimizing $\sum_c \alpha_c^2/|\mathcal{L}(c)|$ subject to $\sum_c \alpha_c = 1$ via a Lagrange multiplier yields
$\alpha_c \propto |\mathcal{L}(c)|$, i.e., $\alpha_c = |\mathcal{L}(c)|/|\mathcal{L}(q)|$.
\end{proof}

Proposition~\ref{prop:minvar} gives a stronger statement than ``unbiased'':
when each child mean aggregates a different amount of evidence (more descendants $\Rightarrow$ lower estimation variance),
the optimal way to combine them is to weight by subtree size.
This is precisely what PathAgg implements via $w_{q\to c}$.

\subsection{Connection to Our Refresh Update}
\label{sec:analysis:refresh_connection}
Our refresh applies \eqref{eq:refresh_generic} recursively with $\lambda\in[0,1]$ (we use $\lambda=0.5$).
The recursion can be unrolled to show that the refreshed score is a geometrically discounted mixture of descendant signals along paths.
For a tree, define a random process that starts at $q$ and at each internal node transitions to a child $c$ with probability $w_{q\to c}$.
Let $P(q\!\to\!v)$ denote the set of nodes $v$ reachable from $q$ at depth $\ell$.
Then repeated bottom-up PathAgg updates induce contributions proportional to (i) the product of path weights (which makes leaves approximately uniform
under the path-count definition), (ii) the difficulty adjustments along the path through $\phi(\cdot)$, and (iii) a depth discount controlled by $\lambda$.
Intuitively, PathAgg therefore tracks an ``average downstream learnability'' of a query under the same subtree structure that generated its augmentations,
whereas ChildAgg treats each immediate child equally and can amplify noise from small subtrees or transient branches.

\textbf{Practical implication.}
In our setting, augmentation can be highly unbalanced: some parents generate many valid children (large $s(c)$), while others yield only a few.
Because the pool is capacity-bounded and sampling focuses near the boundary, mis-estimating a parent's statistic can have outsized effects on which region
is repeatedly sampled.
PathAgg uses more of the evidence where it exists (large subtrees) and downweights fragile branches, which leads to a smoother, more stable boundary signal.
This provides a plausible mechanism for the empirical result PathAgg $>$ ChildAgg observed across datasets and optimizers.

\begin{figure}[t]
    \centering
    \includegraphics[width=0.8\linewidth]{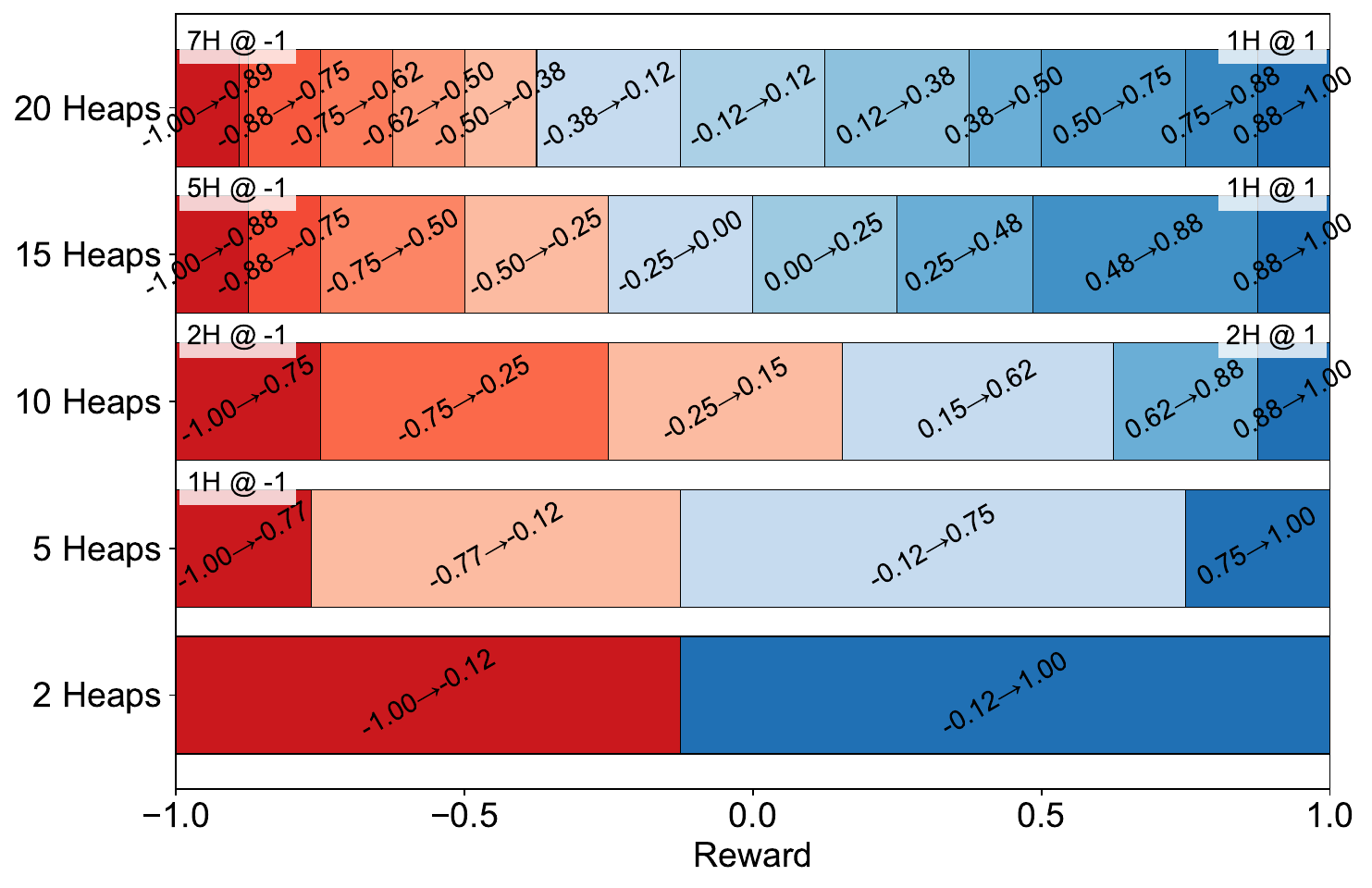}
    \caption{
    Visualization of reward-axis partitioning under multi-heap prompt pools.
    Each row shows the contiguous reward intervals assigned to bins (heaps) for a given $H$ (we linearly rescale the pool statistic $\tilde r\in[-1,1]$ to $[-1,1]$ for visualization, where larger values indicate easier prompts).
    The numbers inside each segment denote the corresponding interval endpoints.
    Labels of the form ``$k$H @ $-1$'' indicate that $k$ bins are \emph{anchored and concentrated near the hardest end} of the reward axis (starting at $-1$), while ``$k$H @ $1$'' indicates bins anchored near the easiest end (ending at $1$).
    As $H$ increases, more boundaries fall into the extreme-hard region, which increases the chance that boundary-based sampling selects too-difficult prompts.
    }
    \label{figure:multi-heap-boundary}
\end{figure}

\begin{table*}[t]
\centering
\resizebox{\textwidth}{!}{%
\begin{tabular}{@{}lp{2.8cm}|ccccccc|c@{}}
\toprule
Train Set & \#Heaps & AIME24 & AIME25 & AMC23 & GPQA & MATH500 & MinervaMath & Olym.Bench & Avg. \\
\midrule
\multicolumn{10}{l}{\emph{Training Dataset: \textsc{DAPO-Math}}~\citep{DAPO}} \\
\midrule
\textsc{DAPO-Math} & 2  & 21.4 & 21.0 & 82.4 & 42.7 & 85.6 & 51.4 & 52.9 & 51.1 \\
\textsc{DAPO-Math} & 5  & 16.7 & 21.5 & 81.6 & 38.6 & 80.1 & 42.2 & 48.6 & 42.8 \\
\textsc{DAPO-Math} & 10 & 21.7 & 24.8 & 80.5 & 38.0 & 81.5 & 42.1 & 49.0 & 43.9 \\
\textsc{DAPO-Math} & 15 & 13.3 & 9.2 & 77.0 & 33.1 & 59.5 & 23.7 & 30.3 & 30.9 \\
\textsc{DAPO-Math} & 20 & 12.1 & 11.7 & 75.3 & 32.2 & 55.7 & 18.8 & 28.7 & 29.2 \\
\midrule
\multicolumn{10}{l}{\emph{Training Dataset: \textsc{OpenR1-Math}}~\citep{OpenR1}} \\
\midrule
\textsc{OpenR1-Math} & 2  & 20.6 & 15.5 & 68.4 & 38.5 & 81.3 & 37.7 & 45.9 & 44.0 \\
\textsc{OpenR1-Math} & 5  & 12.9 & 16.0 & 63.6 & 36.5 & 78.5 & 37.5 & 45.1 & 37.2 \\
\textsc{OpenR1-Math} & 10 & 16.0 & 14.6 & 61.7 & 36.5 & 78.5 & 35.2 & 45.0 & 36.8 \\
\textsc{OpenR1-Math} & 15 & 14.2 & 9.8 & 60.0 & 35.0 & 75.6 & 25.5 & 40.7 & 33.0 \\
\textsc{OpenR1-Math} & 20 & 15.2 & 11.9 & 57.8 & 34.1 & 75.3 & 27.6 & 41.6 & 33.4 \\
\bottomrule
\end{tabular}%
}
\caption{Effect of the number of heaps for GRPO with Qwen2.5-7B-Instruct on \textsc{DAPO-Math} and \textsc{OpenR1-Math}.}
\label{table:heap-ablation}
\end{table*}

\section{Fine-Graining Difficulty Selection with Multi-Heaps}
\label{app:multi_heaps}
A core motivation of \name{} is to track the \emph{moving capability frontier}: prompts that are neither always solved nor always failed under the current policy.
In group-based RLVR, such medium-difficulty prompts are particularly valuable because they tend to produce mixed-success rollout groups, yielding informative within-prompt advantages.
Our default two-heap pool implements this idea by maintaining a single boundary between ``hard'' and ``easy'' scored items and sampling from a narrow band around that boundary.

This naturally raises a question: can we obtain an even more precise curriculum by using \emph{more than two heaps}?
Intuitively, increasing the number of heaps $H$ partitions the difficulty axis into finer bins and creates more local boundaries, potentially enabling the sampler to focus on a narrower difficulty slice.
To explore this hypothesis, we generalize the two-heap pool into a multi-heap pool and sample queries from the gaps between adjacent heaps (i.e., the boundaries between consecutive bins), which can be viewed as a multi-resolution extension of boundary sampling.

\paragraph{Multi-heap pool and boundary sampling.}
Recall that each query record maintains a scalar pool statistic $\tilde r\in[-1,1]$ (e.g., group pass rate or its refreshed variant), used only for sampling.
To visualize the partitioning, we linearly rescale $\tilde r$ to the interval $[-1,1]$ (higher $\Rightarrow$ easier) and show the induced bin boundaries in
Figure~\ref{figure:multi-heap-boundary}.
For $H{=}2$, the pool is split into two coarse partitions separated at a single boundary (e.g., $[-1.00,-0.12]$ vs.\ $[-0.12,1.00]$; see the first row in the figure).
For larger $H$, we partition the reward axis into more bins and sample queries around the \emph{gaps} (i.e., boundaries) between adjacent heaps, with the intent
to concentrate probability mass on medium difficulty regions at higher resolution.

\paragraph{Why increasing the number of heaps can hurt.}
Empirically, $H{=}2$ achieves the strongest average results on both training corpora, while larger $H$ quickly degrades (Table~\ref{table:heap-ablation}).
We attribute this to a mismatch between (i) the \emph{geometry of the reward distribution} in RLVR pools and (ii) the \emph{uniform treatment of boundaries}
implicit in multi-heap sampling.

\noindent\textbf{(1) More heaps create more boundaries in the extreme-hard tail.}
In RLVR math pools, the scored-statistic distribution is typically heavy near the low end early-to-mid training (many prompts are still mostly unsolved).
As a result, when we increase $H$, a large fraction of the additional bins end up splitting the already-dense hard tail, creating \emph{many} adjacent boundaries
whose neighborhoods correspond to very low success.
This is exactly what Figure~\ref{figure:multi-heap-boundary} shows: as $H$ increases, the partition introduces multiple narrow intervals near $-1$
(e.g., the 15-heap configuration allocates ``5H @ -1'' and the 20-heap configuration allocates ``7H @ -1'', meaning many bins are concentrated at the hardest end).
Consequently, sampling ``around boundaries between heaps'' increasingly selects queries from the too-hard region.

\noindent\textbf{(2) Too-hard samples collapse group advantages and waste rollouts.}
Group-based RLVR methods (GRPO/DAPO) learn most from prompts that produce \emph{mixed} rollout outcomes (some correct, some incorrect), because the within-prompt
baseline yields non-degenerate advantages.
Extremely hard bins instead produce nearly all-zero rewards in a group, leading to near-zero advantages and weak gradient signal while consuming the same rollout budget.
Thus, over-selecting boundaries inside the hard tail directly reduces sample efficiency and can slow or even destabilize training.

\noindent\textbf{(3) Granularity increases noise and churn under fixed pool capacity.}
With fixed pool capacity $N$, increasing $H$ reduces the expected number of items per bin, which makes per-bin boundary statistics noisier and more sensitive to
stochastic pass-rate fluctuations, recent refresh values, and eviction/reinsertion dynamics.
This amplifies ``bin hopping'' (records frequently crossing nearby thresholds) and can create curriculum oscillations: the sampler overreacts to small score changes
because the boundaries are dense and each bin contains fewer items.

\noindent\textbf{(4) Tail-biased partitioning is structural in our multi-heap design.}
Our multi-heap configurations intentionally allocate some number of bins anchored at the extremes to preserve resolution at the tails (notations like
``$k$H @ $-1$'' and ``$k$H @ $1$'').
However, because the hard tail is typically much more populated than the easy tail during training, allocating many anchored bins at $-1$ effectively increases
the number of hard-tail boundaries the sampler can land on.
For instance, the figure explicitly indicates settings such as ``2H @ -1 2H @ 1'' for $H{=}10$ and ``7H @ -1 1H @ 1'' for $H{=}20$, illustrating that
a large portion of additional resolution is spent near the hardest region.
Under boundary-based sampling, this inevitably increases exposure to extreme difficulty, which aligns with the performance degradation observed at $H{\ge}15$.

\begin{figure*}[t]
    \centering
    \begin{subfigure}[t]{0.49\textwidth}
        \centering
        \includegraphics[width=\linewidth]{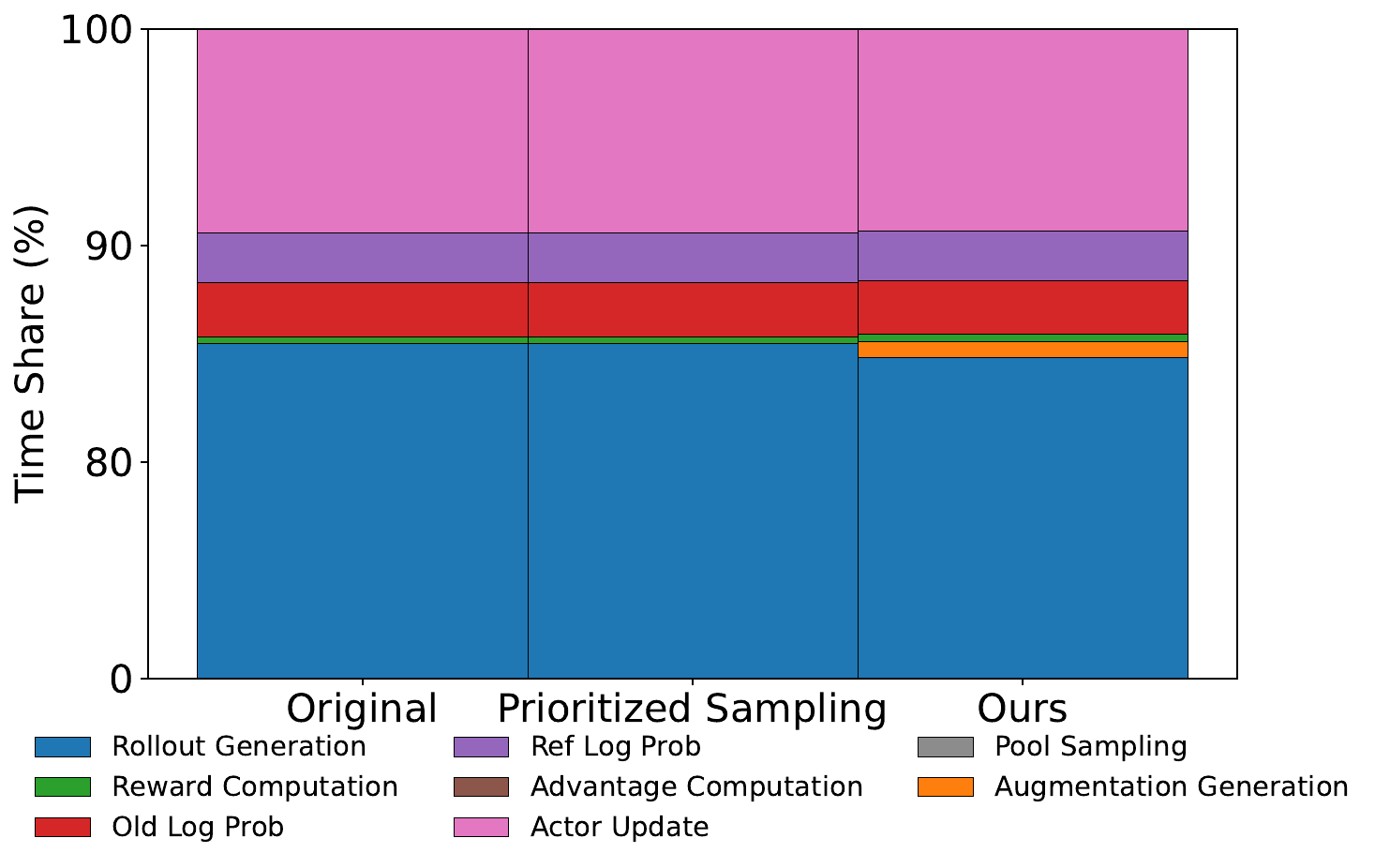}
        \caption{
        Stage-wise time share for the 8B model under three training configurations
        (Original, Prioritized Sampling, and Ours). The y-axis is truncated to
        highlight the contribution of non-rollout stages.
        }
        \label{fig:time_breakdown_8b}
    \end{subfigure}\hfill
    \begin{subfigure}[t]{0.49\textwidth}
        \centering
        \includegraphics[width=\linewidth]{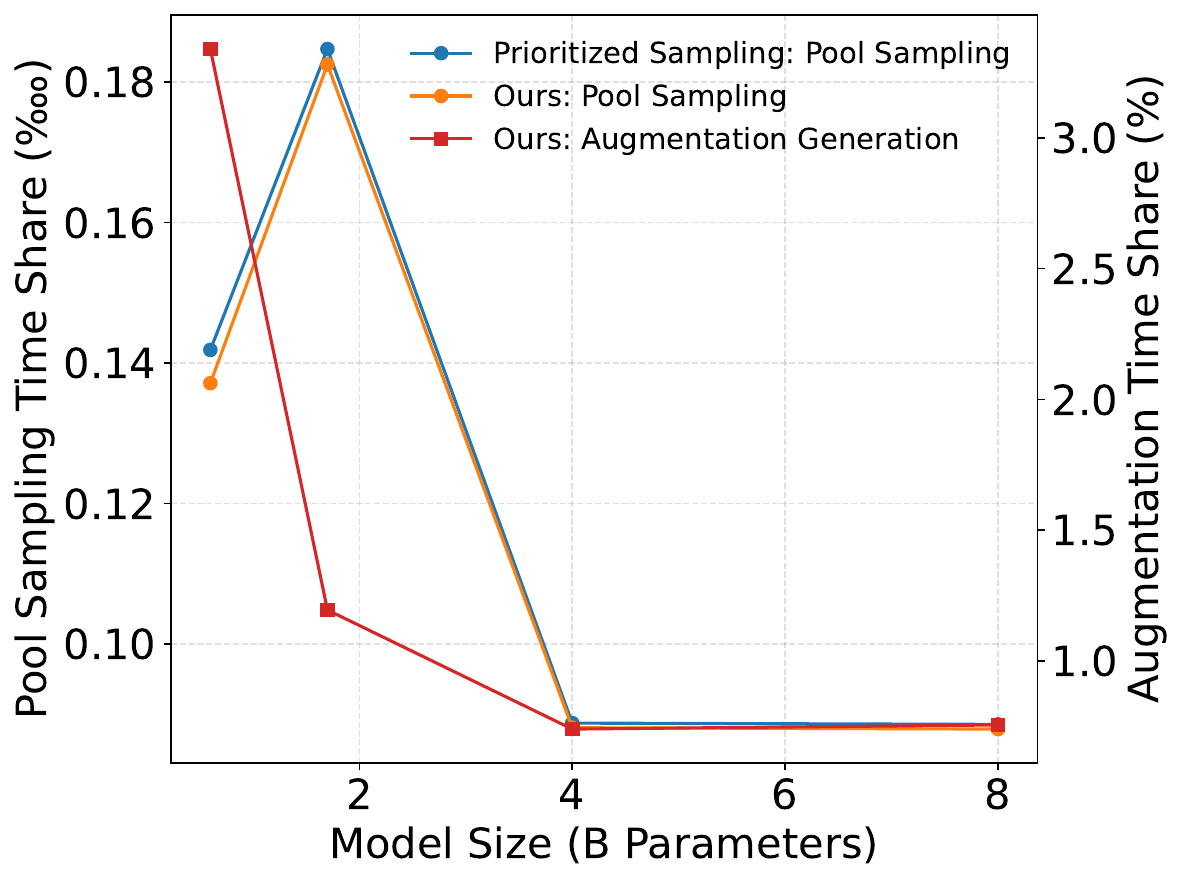}
        \caption{
        Pool sampling and augmentation time share versus model size.
        The left y-axis shows pool sampling time share in per-10,000 units of total time, and the right y-axis shows augmentation time
        share in percent for our method.
        }
        \label{fig:time_share_vs_scale}
    \end{subfigure}

    \caption{
    Training-time overhead of \name{}.
    Left: stage-wise wall-clock time share for the 8B backbone (zoomed to emphasize non-rollout stages).
    Right: time share of pool sampling (per-10,000 units of total time) and on-policy augmentation generation (\% of total time) versus model size.
    }
    \label{fig:training_time_overhead_two_panel}
\end{figure*}

\section{Training-Time Breakdown and Overhead}
\label{app:time_breakdown}
A potential concern for \name{} is that we introduce additional mechanisms beyond a standard RLVR loop: (i) a more complex sampler (heap-based boundary sampling) and
(ii) an explicit on-policy query augmentation budget.
Both add extra bookkeeping and generation calls, so it is important to verify that the method does not materially degrade wall-clock efficiency.

\paragraph{Profiling protocol.}
We measure per-step wall-clock time and decompose it into the stages shown in Figure~\ref{fig:time_breakdown_8b}.
The dominant stage in RLVR is rollout generation (sampling $n$ long responses per prompt), and we therefore visualize stage-wise shares with the y-axis truncated
to highlight small but potentially non-negligible components.
Teacher verification runs asynchronously and is not on the critical path; the overhead shown for \name{} includes (a) the pool sampling logic executed on the driver
and (b) the student-side augmentation generation that produces candidate problems before asynchronous verification.

\paragraph{Stage-wise time share at 8B: rollout dominates, and \name{} adds only a thin slice.}
Figure~\ref{fig:time_breakdown_8b} shows the breakdown for the 8B model under three configurations (Original, Prioritized Sampling, and Ours).
As expected, rollout generation occupies the vast majority of time, while reward computation, log-prob extraction, advantage computation, and the actor update
share the remaining small fraction.
Within this zoomed view, the additional stages introduced by \name{}---\emph{pool sampling} and \emph{augmentation generation}---are visible but remain minor
relative to rollout generation and the standard RLVR bookkeeping.
This supports the design goal that query-lifecycle mechanisms should primarily \emph{reallocate} rollout budget toward more learnable prompts rather than
becoming a new bottleneck.

\paragraph{Scaling trend: pool sampling is essentially free, and augmentation share stays small and shrinks with model size.}
Figure~\ref{fig:time_share_vs_scale} isolates the two new components as model size increases.
Pool sampling time share is plotted in per-10,000 units of total time and stays around $0.10$--$0.18$ across sizes for both prioritized sampling and our method,
i.e., well below $0.002\%$ of total wall-clock time.
This is expected because boundary sampling consists of a small number of CPU-side heap operations (amortized $O(\log N)$ per pop/push with modest constants),
performed once per training step.

Augmentation generation is plotted on the right y-axis (percent) and remains in the low single-digit range (roughly $1$--$3\%$ in the figure).
Moreover, its fraction decreases as model size grows.
The reason is that the augmentation stage generates comparatively short candidate prompts, while rollout generation scales steeply with model size and dominates
GPU time due to long responses and large rollout groups; as the rollout cost grows, a largely fixed augmentation budget becomes a smaller fraction of the step.

\section{\name{} vs.\ Distillation Pipelines}
\label{app:distillation}
Finally, we clarify why \name{} uses the \emph{student} policy $\pi_\theta$ to generate new queries and uses the teacher $\mathcal{T}$ only to annotate verifiable answers, rather than applying stronger teacher-driven distillation. We briefly contrast two common distillation pipelines.

\paragraph{(1) Teacher-generated question--answer distillation.}
A strong teacher can synthesize both questions and answers and the student imitates them via supervised training.
While effective for coverage expansion, this pipeline is \emph{teacher-centric}: the synthetic query distribution is shaped by the teacher and can drift away
from the student's \emph{moving capability frontier}. In RLVR, such drift often yields batches that are too easy or too hard for the current student, producing
near-saturated rollout groups and weak within-prompt advantages (Eq.~\eqref{eq:group-advantage}). In contrast, \name{} generates candidates on-policy from
$\pi_\theta$ (Appendix~\ref{app:prompts:augment}), which implicitly calibrates novelty and difficulty to the student's current state; the teacher is used only
to provide $g(x)$, keeping the optimization anchored to verifier-computed rewards.

\paragraph{(2) On-policy distillation (OPD).}
OPD supervises the student on its own rollouts using a teacher, e.g., via token-level logits or black-box teacher feedback on-policy~\citep{lu2025onpolicydistillation,blackboxOPD}.
However, OPD typically requires running the teacher over long rollout trajectories, so teacher compute scales with rollout length and can add substantial overhead
when rollout generation dominates. \name{} avoids this by using the teacher only for short, answer-only verification calls executed asynchronously
(\S\ref{sec:method:aug}), and by improving sample-efficiency through frontier-focused boundary sampling and pool growth rather than dense trajectory supervision.

\end{document}